\documentclass[11pt]{article}
\usepackage[letterpaper, margin=0.98in]{geometry}
\usepackage[utf8]{inputenc}
\usepackage[T1]{fontenc}
\usepackage{placeins}
\usepackage{times}
\usepackage{natbib}
\usepackage{titletoc}
\usepackage{url}

\usepackage{microtype}
\usepackage{hyperref}
\usepackage{url}
\usepackage{graphicx,tcolorbox}
\usepackage{booktabs}
\usepackage{amsthm}
\usepackage{tikz}

\usepackage{wrapfig}
\usepackage{lineno}
\usepackage[scaled=.92]{inconsolata}   

\usepackage{pifont}          
\usepackage{twemojis}

\newcommand{\sunmark}{\textsuperscript{\twemoji{sun}}}
\newcommand{\snowmark}{\textsuperscript{\textcolor{cyan}{\ding{100}}}} %


\usepackage{amsmath,amsfonts,dsfont,bm,amssymb}

\newtheorem{lemma}{Lemma}

\usepackage[suppress]{color-edits}
\addauthor[P]{pd}{blue}
\addauthor[T]{tb}{red}
\addauthor[B]{bv}{orange}
\addauthor[C]{ct}{magenta}



\newcommand{\ord}{s}
\newcommand{\ordset}{\text{ord}}







\def\eqref#1{Eqn. (~\ref{#1})}








\def\1{\bm{1}}




\def\w{{\mathbf{w}}}
\def\x{{\mathbf{x}}}

\def\rvz{{\mathbf{z}}}

\def\vsn{\vspace{0mm}}





\def\mA{{\bm{A}}}

\DeclareMathAlphabet{\mathsfit}{\encodingdefault}{\sfdefault}{m}{sl}
\SetMathAlphabet{\mathsfit}{bold}{\encodingdefault}{\sfdefault}{bx}{n}


\def\Nc{{\mathcal{N}}}


\def\E{{\mathbb{E}}}


\def\I{{\mathbb{I}}}



\newcommand{\ind}{\mathds{1}}






\newcommand{\R}{\mathbb{R}}

\newcommand{\xtest}{\x_\text{test}}
\newcommand{\wbayes}{\w_\text{BO}}
\newcommand{\wLSd}{\w_{d}^{\text{LS}}}
\newcommand{\wLSdt}{\w_{d/2}^{\text{LS}}}
\newcommand{\wLSdp}{\w_{d'}^{\text{LS}}}


\usepackage[capitalize]{cleveref}
\definecolor{darkblue}{rgb}{0, 0, 0.5}
\hypersetup{colorlinks=true, citecolor=darkblue, linkcolor=darkblue, urlcolor=darkblue}

\title{In-Context Occam’s Razor: How Transformers Prefer\\ Simpler Hypotheses on the Fly}

\author{Puneesh Deora\sunmark \quad Bhavya Vasudeva\snowmark \quad Tina Behnia\sunmark \quad Christos Thrampoulidis\sunmark\\
\vspace{-2.5mm}\\
\sunmark University of British Columbia \quad \snowmark University of Southern California\\
\texttt{%
  \{puneeshdeora,tina.behnia,cthrampo\}@ece.ubc.ca,  %
  bvasudev@usc.edu%
}}

\date{}

\newcommand{\y}{\mathbf{y}}

\begin{document}
\maketitle
\begin{abstract}
{In-context learning (ICL) enables transformers to adapt to new tasks through contextual examples without parameter updates. While existing research has typically studied ICL in fixed-complexity environments, practical language models encounter tasks spanning diverse complexity levels. This paper investigates how transformers navigate hierarchical task structures where higher-complexity categories can perfectly represent any pattern generated by simpler ones. 
We design well-controlled testbeds based on Markov chains and linear regression that reveal transformers not only identify the appropriate complexity level for each task but also accurately infer the corresponding parameters—even when the in-context examples are compatible with multiple complexity hypotheses. Notably, when presented with data generated by simpler processes, transformers consistently favor the least complex sufficient explanation. We theoretically explain this behavior through a Bayesian framework, demonstrating that transformers effectively implement an in-context Bayesian Occam's razor by balancing model fit against complexity penalties. We further ablate on the roles of model size, training mixture distribution, inference context length, and architecture. Finally, we validate this Occam's razor-like inductive bias on a pretrained GPT-4 model with Boolean-function tasks as case study, suggesting it may be inherent to transformers trained on diverse task distributions.}
\end{abstract}

\section{Introduction}
\vsn
In-context learning (ICL) has emerged as a fundamental capability of large language models (LLMs), enabling them to adapt to novel tasks through contextual examples without parameter updates \citep{few-shot-learners}. This remarkable ability has drawn significant attention in interpretability research, with studies examining both commercial-scale LLMs \citep{circuits-anthropic, wang-etal-2023-label, min-etal-2022-rethinking} and controlled, synthetic environments. The latter approach trains transformers from scratch on well-defined tasks—such as linear regression \citep{garg_icl, akyurek_icl, vonOswald_icl, spencer_icl}, discrete functions \citep{bhattamishra2024understanding}, and Markov processes \citep{statistica_induction_heads, rajaraman2024transformers, algorithmic_phases}—offering precise control over training distributions and enabling direct comparisons with known algorithms. However, these synthetic studies typically restrict analysis to tasks of \emph{fixed complexity}, diverging from real-world scenarios where LLMs encounter diverse tasks spanning multiple complexity levels.

To bridge this gap, we investigate ICL in environments featuring hierarchical task complexity structures. We design task categories with \emph{distinct complexity} levels, where higher-complexity categories form strict supersets of lower ones: specifically, the higher-complexity category contains hypotheses capable of perfectly emulating any pattern produced by simpler categories. For instance, consider a transformer trained on next-token prediction for sequences generated by both order-1 and order-3 Markov chains. Since any order-1 chain can be perfectly represented as a special case of an order-3 chain, an inherent ambiguity arises during inference: when presented with a sequence genuinely generated by an order-1 process, can the transformer identify the true complexity class, or will it default to the most expressive hypothesis in its repertoire? 
\begin{figure}[t]
    \centering
    \includegraphics[width=0.97\linewidth]{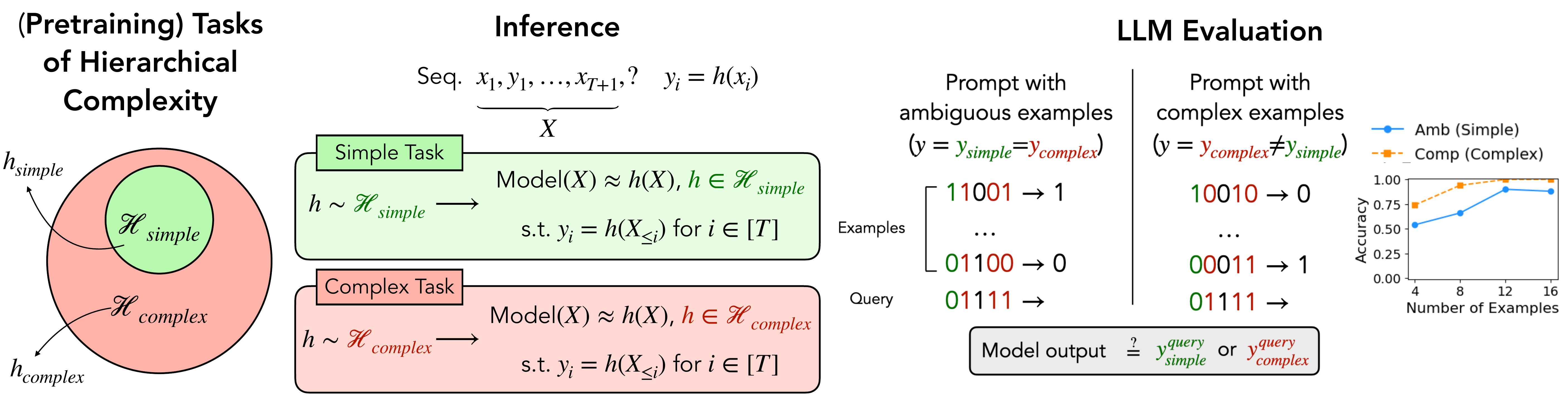}
    \caption {\textbf{(Left)} We train transformers on a hierarchy of task families in which the complex class strictly contains the simpler one (the simple category). \textbf{(Middle)} After training, the models use the in-context examples to infer both the appropriate complexity level and the task parameters that best fit the context, on the fly. \textbf{(Right)} We probe the same inductive bias in off-the-shelf LLMs: prompted with ambiguous examples that admit both a simple and a complex explanation (copy the first bit or take majority over three bits (second, fourth and fifth), respectively), the model performs in-context Occam's razor by consistently following the simple hypothesis (see \cref{sec:llm-expt} for details.).
    }
    \label{fig:intro}
\end{figure}
\begin{center} \textit{Can transformers effectively differentiate between tasks of varying complexity during in-context learning? When presented with data compatible with multiple hypothesis classes, do they accurately identify the simplest sufficient hypothesis, or do they systematically default to the most complex available hypothesis?} \end{center}
Our systematic investigation answers this question affirmatively. See \cref{fig:intro} for an overview of our framework. Returning to our Markov chain example, \cref{fig:main-fig} demonstrates that at inference time, the transformer successfully recognizes the true order of the generating process. When presented with a sequence from an order-1 chain, (left) it appropriately employs bigram statistics; when presented with an order-3 chain, (right) it switches to tetragram statistics. This confirms that transformers can distinguish between different complexity categories and adapt their predictions to the context, rather than defaulting to the highest-complexity hypothesis available.

Our contributions are as follows: 
\begin{itemize}
    \item We introduce a framework for studying ICL across hierarchical complexity levels, where higher-complexity tasks form strict supersets of simpler ones, creating inherent ambiguity in hypothesis selection in context.
    \item We empirically demonstrate in two representative testbeds, a Markov chain and a linear regression setting, that transformers correctly identify the simplest sufficient hypothesis rather than defaulting to the more expressive one.
    \item We provide a theoretical justification through a Bayesian lens, showing that transformers naturally implement Bayesian Occam's razor in-context by balancing data likelihood against model complexity.
    \item Additionally, going beyond the two testbeds, we demonstrate the same behaviour on transformers trained on probabilistic context-free grammars (PCFGs) with varying complexity, and on pretrained LLMs (GPT-4) tested on a representative setting with Boolean-function tasks. 
\end{itemize}

    The remainder of the paper is organized as follows. \Cref{sec:previous_work} introduces necessary background and related work on ICL in synthetic setups. \Cref{sec:results} describes how we construct tasks of varying complexity in both Markov chain and linear regression settings, presents our experimental results, demonstrating the transformer's ability to identify appropriate hypothesis classes, and develops our theoretical analysis explaining how transformers implement in-context Bayesian Occam's razor. It also includes results for PCFGs and pre-trained LLMs. \Cref{sec:add_expts} includes additional experimental results and ablations. 
\begin{figure}
    \centering
    \includegraphics[width=0.6\linewidth]{./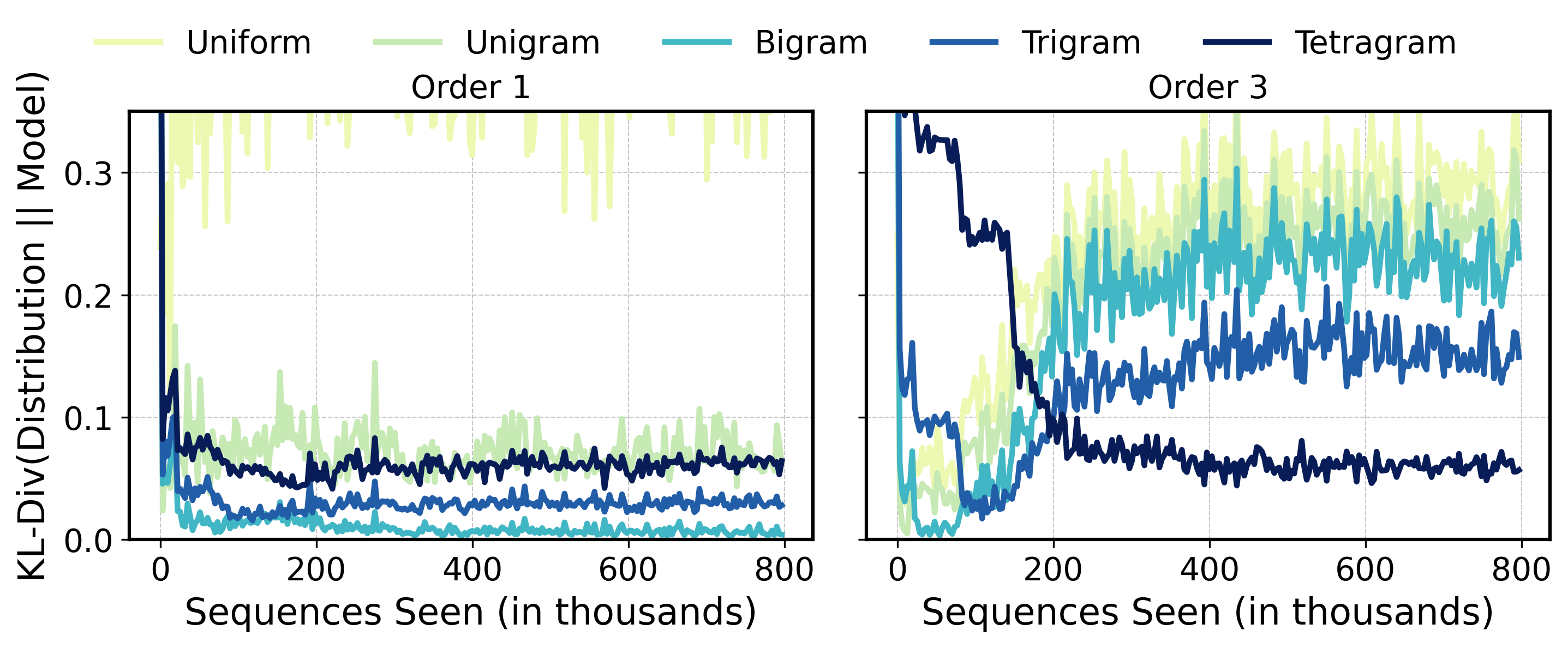}
    \vsn
    \caption{We train a transformer for next-token prediction on sequences generated by random order-1 and order-3 Markov chains. During inference, we assess its ICL ability by evaluating performance on sequences derived from unseen order-1 and order-3 chains. \textbf{(Left)} shows the distance between the model’s output distribution on the last token and $n$-gram statistics of the context for order-1 inference, as a function of the number of sequences seen during training. \textbf{(Right)} presents analogous results for order-3 inference. \textit{ Notably, the trained transformer can identify the true order (1 or 3) of the context, and then predict using either bigram or tetragram statistics accordingly.}
    \vsn
    }
    \label{fig:main-fig}
\end{figure}

\vsn
\section{Background and Previous Work} \label{sec:previous_work}
\vsn
To understand the mechanisms underlying ICL, researchers have developed controlled synthetic environments with well-defined tasks. In just the last couple of years, there has been a proliferation of work on this topic. Here, we focus on reviewing two key synthetic setups: Markov chains and linear regression, introduced by \citet{statistica_induction_heads} and \cite{garg_icl}, respectively, which form the basis for our investigation into tasks of varying complexity. We then discuss follow-up studies most relevant to our analysis. For a comprehensive review and additional references, we refer readers to \cite{dong-etal-2024-survey}. 
\vsn
\subsection{Markov Chains and Linear Regression as Testbeds for ICL}
\vsn

\paragraph{ICL of Markov Chains.} Markov chains were introduced as a  testbed for studying ICL  by \citet{statistica_induction_heads}. Specifically, they showed that transformers trained on sequences generated from random order-1 Markov chains learn to do in-context inference on unseen Markov chains of order-1 by learning to output bigram statistics of the context. 

To establish the foundation for our study, we now formalize their key findings. Let $x_t$ denote the $t$-th symbol from a finite vocabulary $[V] = \{ 1, ..., V\}$ of size $V$ generated by a $k$-th order Markov chain, \textit{i.e.}, $p(x_t = v |x_{t-1}, ..., x_1) = p(x_t = v|x_{t-1}, ..., x_{t-k+1})$ for all $v\in[V]$ and let $P \in \R^{V^k \times V}$ denote the row-stochastic transition matrix with these  conditional distributions as rows. Sample a transition matrix at random: for example, sample each of its rows according to a Dirichlet prior with parameter $\boldsymbol{\alpha}$. Generate a sequence $X:=X_{\leq T} = [x_1, x_2, ..., x_{T-1}]$ of $T\gg k$ symbols from this transition matrix by letting the first $k$ entries $\{x_1, ..., x_k\}$ be drawn uniformly at random. A transformer model $M_{\theta}$ parameterized by $\theta$ is trained to auto-regressively predict the next-symbol 
$x_{t+1}$ using the previous $t$ symbols $X_{\leq t}$ by (approximately) minimizing the expected loss, with respect to  cross-entropy $\ell$:
\begin{equation}\label{eq:trainloss}
    L(\theta): = \mathop{\mathbb{E}}\nolimits_{\substack{X \sim P \sim \text{Dir}(\boldsymbol{1})^{\otimes V}}}  \Big[
    \sum\nolimits_{t=1}^{T-1}\ell(M_{\theta}(X_{\leq t}), x_{t+1})\Big]\,, 
\end{equation}
where $\text{Dir}(\boldsymbol{1})^{\otimes V}$ denotes $V$ independent draws from the Dirichlet prior. At inference, draw transition matrix $P_{\text{inf}} \sim \text{Dir}(\boldsymbol{1})^{\otimes V}$, and let $X \sim P_{\text{inf}}$ denote a sequence of length $T-1$ generated from it that is given as prompt to the model. \citet{statistica_induction_heads} show that the Kullback–Leibler (KL) divergence of the model's output probability to the birgram statistics 
\begin{equation}\label{eq:bigram}
    \sum\nolimits_{t=2}^{T-1}\ind(x_{t-1} = x_T, x_{t} = v)\Big/\sum\nolimits_{t=2}^T \ind(x_{t-1} = x_T), \quad v \in [V]\,,
\end{equation}
averaged over many realizations of $X, P_\text{inf}$,  is (almost) zero. Here,  $\ind(\cdot)$ is the indicator function that equals 1 when the condition is true and 0 otherwise. Thus, the transformer looks back at the sequence to compute the bigram probabilities, and predict the next token $v \in [V]$ using these probabilities.  

\paragraph{ICL of Linear Regression.} Before Markov chains, the first synthetic playground for ICL was introduced by \citet{garg_icl}, who demonstrated that a transformer trained on random examples from linear regression tasks can learn to perform in-context inference on unseen tasks by computing the least-squares solution. In this setup, sequences consist of interleaved input-output pairs $(\x, y)$ of the form $X = [\x_1, y_1, \x_2, y_2, \dots, \x_t, y_t]$\footnote{To be precise, the model input is $[\x_1, \y_1, \dots, \x_t, \y_t]$, where $\y_i\in\R^d$, with zeros in the first $d-1$ entries and $y_i$ as the last entry.}, with feature vectors $\x_t \in \R^d$ and labels $y_t \in \R$. The feature vectors $\x_t$ are sampled i.i.d from a standard Gaussian distribution, \textit{i.e.}, $\x_t \sim \Nc(\bm{0}, \I_d)$. Each sequence is characterized by a task-specific weight vector $\w \sim \Nc(\bm{0}, \I_d)$, such that labels are generated as $y_t = \w^{\top}\x_t$, $\forall t \in [T]$. 

Similar to the Markov chain setup, the transformer $M_{\theta}$ is trained to predict label $y_{t+1}$ using the previous $t$ pairs $(\x, y)$ along with the new input $\x_{t+1}$ (collectively denoted by $X_{\leq t}$), by minimizing the expected loss over sequences with $T$ pairs:
\begin{equation}\label{eq:trainloss_square}
    L(\theta): = \mathop{\mathbb{E}}\nolimits_{\substack{\x_t \sim \Nc(\bm{0}, \I_{d}),\, \w \sim \Nc(\bm{0}, \I_{d})}}  \Big[
    \sum\nolimits_{t=1}^{T-1}\ell(M_{\theta}(X_{\leq t}), y_{t+1})\Big], 
\end{equation}
where $\ell$ is the squared loss. \citet{garg_icl} demonstrated that when presented with an inference-time sequence $X = [\x_1, y_1, \x_2, y_2, ..., \x_{T-1}, y_{T-1}, \x_T]$ generated from an unseen task vector $\w_{\inf} \sim \Nc(\bm{0}, \I_{d})$, the transformer effectively infers $\w_{\inf}$ from the in-context examples. Specifically, the squared error between the model's prediction and the least-squares prediction $
\hat{y}_\text{LS} = \x_T^\top \w_\text{LS}$ where $\w_\text{LS} = \mA^\dagger \y$ 
approaches zero when averaged over many realizations of $X$ and $\w_{\inf}$. Here, $\mA \in \mathbb{R}^{(T-1) \times d}$ represents the feature matrix with columns $[\x_1, \x_2, \ldots, \x_{T-1}]^{\top}$, $\y \in \mathbb{R}^{T-1}$ is the vector of context labels $[y_1, \ldots, y_{T-1}]$, and $\mA^\dagger$ denotes the Moore-Penrose pseudoinverse.  

\vsn
\subsection{Related Works}
\vsn
\paragraph{Task Diversity: Learning vs Retrieval.} In the setups investigated by \citet{garg_icl,statistica_induction_heads}, the transformer is trained on sequences generated from fresh tasks drawn from the appropriate distributions at each optimization iteration. \citet{raventos2023pretraining,algorithmic_phases} have further extended this study for linear regression and Markov chains respectively, examining scenarios where the model is trained on sequences generated from a finite number of tasks, thus modeling and analyzing the impact of task diversity in ICL and revealing a tradeoff between two distinct modes of ICL: task retrieval and task learning \citep{pan-etal-2023-context}. See also \citet{lu2025asymptotictheoryincontextlearning} for a high-dimensional statistical approach that theoretically justifies some of these empirical findings. In this work, we focus on the learning mechanism of ICL and maintain the vanilla setting of training with fresh tasks at each iteration. However, in our settings these tasks originate from a finite number of complexity categories rather than a single one.
\paragraph{Theoretical Explanations: Linear Regression and Markov Chains.} One of the first works investigating the theoretical foundations of ICL is \citet{xie2022an}, which proposed a Bayesian perspective that has informed numerous subsequent studies, e.g., \cite{raventos2023pretraining}. Most relevant to our work, \citet{statistica_induction_heads} demonstrated that the bigram statistics rule implemented by transformers trained on order-1 Markov chains is the Bayes' optimal solution (given the context) under the Dirichlet prior. Similarly, for linear regression, the least-squares solution learned by transformers is the Bayes' optimal predictor (given the context) under Gaussian priors in the setting of \citet{garg_icl}. 
    A complementary approach to explaining ICL has focused on explicit weight constructions that implement functionalities observed to facilitate ICL, such as gradient descent on the context for linear regression \citep{li2024finegrained, 10.5555/3618408.3619217, ahn2023transformers, vonOswald_icl,fu2024transformers} and statistical induction heads for Markov chains \citep{statistica_induction_heads,rajaraman2024transformers, chen2024unveiling}. However, relatively few studies have investigated how—or whether—these weight configurations can actually be reached during training \citep{spencer_icl,zhang2024context,chen2024trainingdynamicsmultiheadsoftmax,zhang2025trainingdynamicsincontextlearning}. None of these theoretical frameworks has addressed the specific setting of multiple task categories with hierarchical complexity relationships. In this work, we extend the Bayesian viewpoint to explain how transformers select the simplest sufficient hypothesis in both linear regression and Markov chain settings. While these two domains differ in several respects—the format of the prompt (pairs of inputs vs. stream of symbols), the optimization objective (squared loss vs. next-token prediction), and the underlying mechanisms (gradient descent vs. statistical induction heads)—we investigate both settings under a unifying prism.

\paragraph{Algorithm Selection.} Most closely related to our work are \citet{dual-operating-modes, algorithmselection} investigating ICL regression across multiple hypotheses. \citet{dual-operating-modes} considers ICL of linear regression where features $\x$ and tasks $\w$ are drawn from Gaussian mixtures, generalizing the setting of \citet{garg_icl}. This richer framework enables a principled study of the task retrieval versus learning modes through a Bayesian perspective. Similarly, \citet{algorithmselection} investigates ICL under tasks from multiple categories, such as linear regression, noisy linear regression, and linear classification. They demonstrate that transformers implement algorithm selection in context, applying the most appropriate algorithm for the provided context at inference time.
However, these works do not address tasks from different categories with clearly defined hierarchical complexity relationships, where a higher-complexity category can fully explain the context generated by simpler categories. In this particularly challenging setting, we show that algorithm selection is implemented via a form of Bayesian Occam's razor, allowing transformers to identify and apply the simplest sufficient hypothesis. We demonstrate this both for regression and Markov-chain settings. More recent work by \citet{occamicl2025} posits that a meta objective of ICL is linked to a prequential coding algorithm that simultaneously minimizes both algorithm complexity and prediction error on in-context demonstrations. While this intriguing interpretation reveals a form of simplicity preference in ICL, it differs fundamentally from our focus on hypothesis selection across explicitly defined hierarchical complexity structures. Additionally, \citet{xiong2024oncellmsincontextlearn} explore how pretrained LLMs perform ICL when several tasks are presented in superposition within a single context. 
{
Not only} does their framework differ by embedding multiple tasks in the same context simultaneously, but also these tasks are not organized into hierarchically related complexity classes. 

\vsn
\section{In-Context Learning Across Hierarchical Complexity Levels}
\label{sec:results}
\vsn
Previous research has extensively examined ICL for fixed complexity tasks. But how do transformers behave when trained on tasks spanning multiple well-defined complexity levels? Can they accurately identify both the complexity category of a given task and infer its underlying parameters in-context? Here, we introduce two testbeds—extensions of the Markov chain and linear regression paradigms—to systematically investigate these questions. Additionally, we verify the conclusions on a PCFG setting and on a pretrained LLM (GPT-4).
\vsn
\vsn
\subsection{Markov Chains}

\vsn
\subsubsection{Modeling Tasks of Varying Complexity}
\vsn
 To form task categories of varying complexity, we train on Markov chains of different orders, where we group all chains of a fixed order into one task category. The category of order-$k_1$ has lower complexity than any order-$k_2$ category for $k_2 > k_1$ due to fewer degrees of freedom. Importantly, any order-$k_1$ chain can be perfectly represented as a special case of an order-$k_2$ chain, making higher-order categories strict supersets of lower-order ones.

For concreteness, we train the transformer model $M_{\theta}$ on sequences generated 
from two different task categories\footnote{Multiple task categories can be treated in a similar manner without substantial further insights. Thus, we focus on two categories for simplicity of exposition.}. We consider order-$1$ and order-$k$ categories, where $k>1$. During training, we first sample a transition matrix $P_s$ of order-$s$, which is then used to generate a sequence $X = [x_1, x_2, ..., x_T]$ of length $T$. The order $s$ is sampled uniformly at random from the set $\ordset = \{1, k\}$. 
Similar to \cref{eq:trainloss}, we train autoregressively to minimize the (expected) loss, this time over Markov chains of both orders $\{1,k\}$, again with cross-entropy loss $\ell$, i.e.,
\begin{equation}\label{eq:trainloss-varying_order}
    L(\theta): = \mathop{\mathbb{E}}\nolimits_{\substack{X \sim P_\ord\sim \text{Dir}(\boldsymbol{1})^{\otimes V^\ord} \\ \ord \sim \text{Unif}(\ordset)}}  \Big[
    \sum\nolimits_{t=1}^T\ell(M_{\theta}(X_{\leq t}), x_{t+1})\Big] \,.
\end{equation}
 At inference, we prompt the model with a sequence generated from either order-$1$ (simple) or order-$k$ (complex) chain. To evaluate the ICL ability of the model, we compare (with respect to KL divergence) the model's output probability for the token following the prompt to either the order-$1$ statistics of \cref{eq:bigram} or the order-$k$ statistics
 \begin{equation}\label{eq:order-k}
    \sum_{t=k}^{T-1}\frac{\ind(x_{t-1} = x_T, x_{t-2} = x_{T-1}, ..., x_{t-k+1} = x_{T-k}, x_{t} = v)}{\sum_{t=k}^T \ind(x_{t-1} = x_T, x_{t-2} = x_{T-1}, ..., x_{t-k+1} = x_{T-k})},~v\in[V]\,.
\end{equation}
When prompted with an order-$k$ chain, the transformer must recognize the higher-order dependencies that cannot be captured by order-$1$ statistics. However, when prompted with an order-$1$ chain, the transformer faces a more subtle decision, as any order-$1$ transition matrix can be perfectly represented as a special case of an order-$k$ transition matrix. This latter scenario directly tests whether the transformer defaults to the most complex hypothesis or correctly identifies the simplest sufficient one.

\vsn
\subsubsection{Transformers Distinguish Between Task Categories In-context}
\vsn

We train a GPT-2 type decoder-only transformer \citep{minGPT} for all the experiments. 
Please see \cref{app:expt-settings} for specific training details.

\begin{figure}[h!]
    \centering
    \includegraphics[width=0.55\linewidth]{./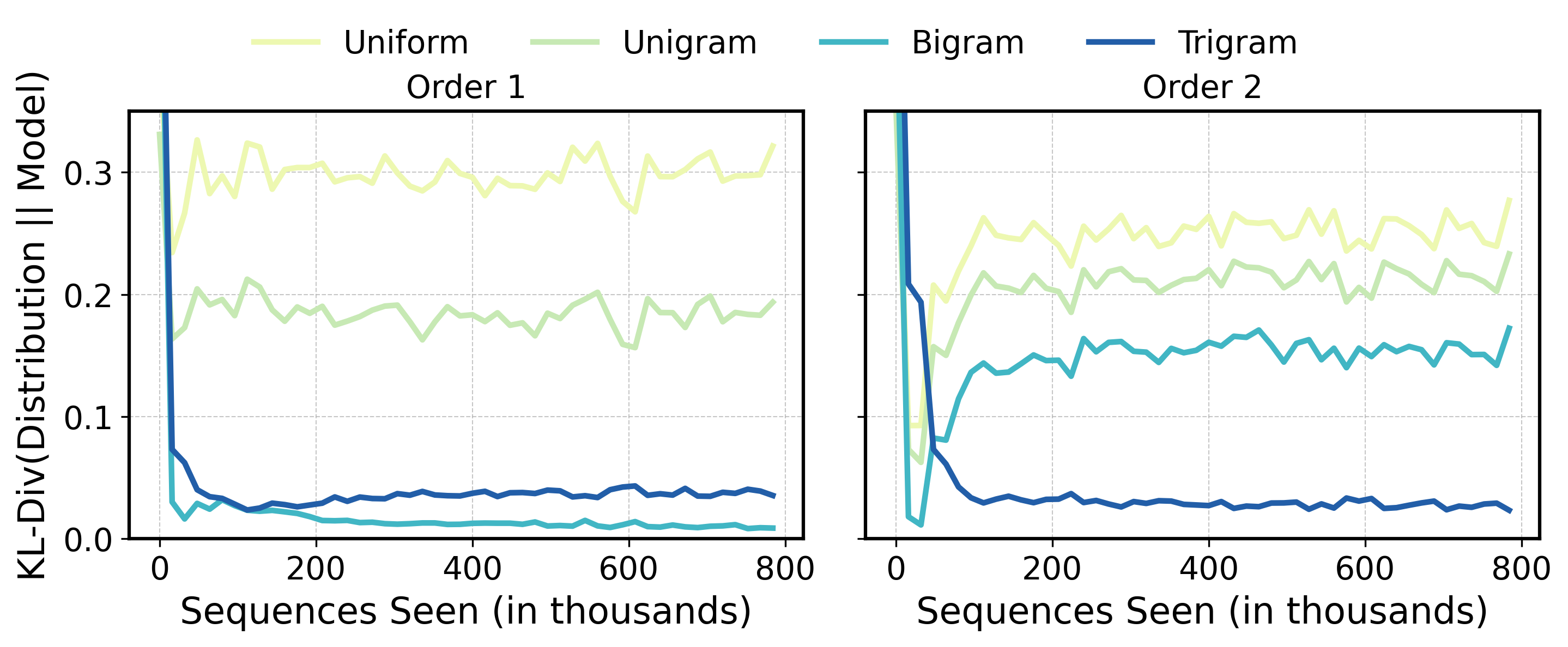}
    \vsn
    \caption{A transformer trained on sequences generated by random order-1 and order-2 Markov chains can infer whether the context is order-1 or order-2, then generate predictions using the corresponding bigram or trigram statistics. \textbf{(Left)} shows the distance between the model’s output distribution on the last token and well-defined context strategies for order-1 inference, as a function of the number of sequences seen during training. \textbf{(Right)} presents analogous results for order-2 inference.
    \vsn}
    \label{fig:order12}
 \end{figure}

 In \cref{fig:main-fig}, we train a transformer on sequences drawn from order-$1$ and order-$3$ Markov chains (i.e., \cref{eq:trainloss-varying_order} with $\ord = \{1, 3\}$). We then measure the KL divergence between the transformer's output distribution and several well-defined statistical strategies when the input is taken from unseen order-1 or order-3 Markov chains: uniform, bigram, trigram, tetragram statistics (\cref{eq:order-k} for $k=0,\ldots,3$). We see clearly that with sufficient training, the transformer consistently learns to distinguish between order-1 and order-3 sequences, applying bigram and tetragram statistics appropriately to each case. 

We validate this finding in \cref{fig:order12} by repeating the experiment with order-1 and order-2 Markov chains, again observing that the model accurately infers the underlying order. Crucially, in both experiments, the transformer does not default to the highest-order statistics (tetragram or trigram) for all inputs, despite these statistics having sufficient expressive power to model order-1 sequences. Instead, it selects the appropriate complexity level based on the in-context sequence. This demonstrates that the transformer effectively implements Occam's razor, identifying the simplest hypothesis adequately explaining prompt.

In the above experiments, the context length $T$ is kept fixed at $T=300$ (\cref{fig:main-fig}) and $T=200$ (\cref{fig:order12}) for both training and inference. Here, we focus on how the transformer learns to distinguish between the two categories as a function of training iterations (equivalently, sequences processed). In \cref{sec:add_expts}, we provide additional experiments examining how performance varies with context length.

\vsn
\subsubsection{Bayesian Interpretation of the Selection Mechanism} \label{sec:bayesian_selection}
\vsn

We adapt a Bayesian viewpoint of ICL \citep{panwar2024incontextlearningbayesianprism,xie2022an,dual-operating-modes} to explain how transformers choose the simplest task that explains the context. In this respect, we assume sufficient training data such that the transformer implements the Bayes optimal minimizer.
Recall the mixture model described in \cref{eq:trainloss-varying_order}, and let $X = [x_1,\dots,x_T]$ be an observed sequence for $T\gg k$. Then, the \emph{Bayes' optimal} predictive distribution  for $x_{T+1}$ given $X$
is the mixture
\begin{equation}\label{eq:bayes mixture chain}
\underbrace{p(x_{T+1} = v \mid X)}_{\text{model output}}
\;=\;
\sum_{\ord\in\text{ord}} \,\underbrace{p(\ord\mid X)}_{\text{category posterior}}\,\cdot\,
\underbrace{p(x_{T+1}=v \mid X,\, \ord)}_{ \text{$s$-gram statistics of $X$}}\,,\quad v\in[V]\,,
\end{equation}
where $p(\ord \mid X)$ denotes the posterior probability of the order-$\ord$ chain given the sequence. Further, $p(x_{T+1}=v \mid X,\, \ord)$ denotes the Bayes' optimal predictive distribution computed with respect to only order-$\ord$ Markov chains. Specifically, for the Dirichlet prior $\text{Dir}(\boldsymbol{1})^{\otimes V^\ord}$ it can be shown that $p(x_{T+1} = v|X\,, \ord)$ which is a smoothed version of the order-$\ord$ statistics in \cref{eq:order-k}; e.g., see \citet[Eq. (3)]{statistica_induction_heads}. 
When $T\gg k$, where both statistics are well-defined for both orders $1$ and $k$, the smoothing effect becomes negligible (particularly since the smoothing is essentially independent of $\ord$).

From \cref{eq:bayes mixture chain}, the model's output (LHS) when trained with chains of all orders $\ord$ can be seen as a \emph{convex mixture} of what would have been the model's output if the transformer was trained only on a single order. This means the model's output crucially depends on the posterior probability coefficients $p(\ord\mid X)$, which we can express as:
   $ p(\ord\mid X)
    =
    {p(X\mid \ord)\,\pi(\ord)}\big/{\sum_{s'\in\ordset}\,p(X\mid s')\,\pi(s')}.
$
Here, $\pi(s)$ is the prior on orders, which is set to be uniform in \cref{eq:trainloss-varying_order} and thus cancels out in the ratio, and $p(X\,|\,\ord)$ is the marginal likelihood of the prompt (context) given the order $\ord$. Consequently, which category's statistics ($\ord$-gram) the transformer's outputs depends on depends on which marginal likelihood $p(X\,|\,\ord)$ dominates.
\vsn

\paragraph{Asymptotics of Likelihood.} For large $T\gg k$, standard Dirichlet--multinomial conjugacy combined with a Bayesian Information Criteria (BIC) style Laplace approximation \citep{schwarz1978estimating,ghosal2017fundamentals,csiszar2006context} gives the following useful approximation to the marginal likelihood \citep{schwarz1978estimating}:
\begin{equation}
    \log p(X\mid \ord)
    \;\approx\;
    \sum\nolimits_t \log \hat{p}_{X}(x_t \mid x_{t-1}, ..., x_{t-\ord})
    \;-\;
    \tfrac{V^\ord\,(V-1)}{2}\,\log T\,.
    \label{eq:log-marginal-likelihood-approx}
\end{equation}
Here, $\hat{p}_X$ denotes the \emph{empirical} conditional probabilities derived from the prompt $X$. See \cref{app:mc-al} for details. Thus, the marginal likelihood decomposes into an empirical likelihood term and a model complexity penalty term.

\vsn
\paragraph{Bayesian Occam’s Razor.} Suppose the data  is generated by an order-$\ord^*$ Markov chain, representing a simple category. We wish to compute the ratio ${p(X\mid \ord)}/{p(X\mid \ord^*)}$ for any complex category of order-$\ord$ with $\ord>\ord^*$. Note that for $T\gg k$, for any $\ord\geq\ord^*$, the empirical probabilities $\hat p_X(x_{t}\,|\,x_{t-1},\ldots,x_{t-\ord})$ approach the true data transition probabilities derived from the simple category $\ord^*$. Thus, the empirical likelihood terms in \cref{eq:log-marginal-likelihood-approx} are (approximately) equal for any $\ord\geq \ord^*$. This reflects the fundamental property that a Markov chain of order $\ord^*$ can be perfectly represented as a Markov chain of order $\ord$ where the transition probabilities are invariant to the additional history. However, the model complexity penalty term being proportional to $V^\ord$ strongly favors the smaller $\ord$. Together, for $T\gg k$ it holds:
\begin{equation}
 \forall \ord>\ord^*\,:\,   \frac{p(X\mid \ord)}{p(X\mid \ord^*)} 
    \approx 0
    ~~\Rightarrow~~ 
    p(\ord^*\mid X)\approx 1
    ~~\stackrel{\cref{eq:bayes mixture chain}}{\Longrightarrow}~~ 
    \underbrace{p(x_{T+1} = \cdot \mid X\sim P_{\ord^*})}_{\substack{\text{model output conditioned on} \\ \text{prompt from simple category}}}
\approx \substack{\text{$s^*$-gram}\\\text{statistics}}\,.
\end{equation}
The model implements \emph{Bayesian Occam's razor} in favor of the true lower-order model $\ord^*$.
\vsn
\paragraph{Saturation to the Higher Order.} Now, suppose that the true process is of order-$k$, rather than (say) order-1, where $k>1$. The model order complexity term in \cref{eq:log-marginal-likelihood-approx} always favors the lower order (here $=1$). However, in this case the empirical likelihood terms differ to each other when evaluated for $s=k$ vs. $s=1$. Previous works \citep{ghosal2017fundamentals, csiszar2006context} have shown that the \emph{per-sample} improvement in log likelihood under the correct, higher-order model accumulates linearly in $T$. 
Concretely, for large $T$ and some universal constant $c>0$, 
\[
\log p(X\mid \ord=k)- \log p(X\mid \ord=1)
\,\approx\, c\,T \;-\;\tfrac{(V^{k-1} - 1)V(V-1)}{2}\,\log T\,\,\,\stackrel{T\gg k}{\Longrightarrow}\,\,\, p(\ord=k\mid X) \approx 1.
\]
Thus, the posterior probability correctly \emph{saturates} to the true higher order once $T$ is large. In \cref{sec:construction}, we construct a 2-layer transformer that can compute the empirical conditional probabilities $\hat{p}_X$, which as we saw are key to compute the posterior $p(s\mid X)$ needed to realize the Bayes' optimal solution. 

\vsn
\subsection{Linear Regression} 
\vsn
\subsubsection{Modeling Tasks of Varying Complexity}\label{sec:varying_complexity_regression}
\vsn
We construct a similar setup of two task categories with varying complexity for ICL of linear regression. 
We define two categories with hierarchical complexity: a "complex" category parameterized by $\w = \w_d \sim \Nc(\bm{0}, \I_d)$ using the full feature space, and a "simple" category where $\w$ lies on a lower-dimensional subspace of $\R^d$ with the structure $\w = [\w_{d/2}, \bm{0}]$, where $\w_{d/2} \sim \Nc(\bm{0}, \I_{d/2})$\footnote{The specific choice of $d/2$ as the dimensionality and the alignment of the subspace with the coordinate axes are made for simplicity, without affecting the generality of our findings.}. This simple category corresponds to a sparse linear regression where half the features have no impact on the output. The tasks in the complex category have higher degrees of freedom ($d$ vs. $d/2$) and thus greater expressive power. Importantly, the complex category forms a strict superset of the simple category in the sense that any sequence $X$ generated from the simple category can be perfectly explained by some parameter from the complex category. 
The training details are analogous to \cref{eq:trainloss_square} only now we first sample $s$ uniformly from the set $\text{dim} = \{d/2,d\}$, then generate $\w_s\sim\Nc(\bm{0},\I_s)$ and generate the labels $y_t = \w^{\top}\x_t$ for $t \in [T]$ where for $s=d/2$, we effectively use $\w = [\w_s, \bm{0}]$ (zero-padded to dimension $d$). Please see \cref{app:reg-tr-details} for further details.

At inference time, we generate prompts using either $\w_d$ or $\w_{d/2}$ as the underlying task parameters. Let $X$ denote a sequence with $T$ in-context examples $(\x_t,y_t)$, and $\x_{\text{test}}$ the query vector for which we want to predict a label. We compare the transformer's output to two least-squares (LS) benchmark estimates: $\hat y=\x_{\text{test}}^\top\w^{\text{LS}}$, where $\w^{\text{LS}}$ is either the full $d$-dimensional solution $\wLSd:=\mA_d^\dagger\y$, or the restricted $d/2$-dimensional solution $\wLSdt = \mA_{d/2}^\dagger\mathbf{y}$. Here, $\mA_d:=[\x_1, \x_2, \ldots, \x_T]^{\top}$ is the feature matrix, $\y=[y_1, \ldots, y_T]$ is the label vector, and $\mA_{d/2}:=\mA_d[\I_{d/2} \quad \bm{0}_{d/2}]^{\top}$ is the projection of the feature matrix onto the first $d/2$ dimensions. When context length $T < d$ or $T < d/2$ for the respective estimates, the LS solution corresponds to the minimum $\ell_2$-norm interpolating solution.

When prompted with a sequence generated by the complex regressor $\w_{d}$, the full-dimensional solution $\w_{d}^{\text{LS}}$ naturally provides a better fit than $\wLSdt$, which is constrained to a $d/2$-dimensional subspace. However, the critical question arises when we prompt the model with a sequence generated by the simple regressor $\w_{d/2}$: Does the transformer default to the more expressive full-dimensional solution, or does it correctly identify the simpler hypothesis?
Specifically when $d > T \geq d/2$, both solutions perfectly interpolate the context data, yet they generally differ in their predictions. This creates an ideal test for whether transformers implement a form of Occam's razor when selecting in-context between equally valid explanations of varying complexity.
\vsn
\subsubsection{Transformers Distinguish Between Task Categories In-context}
\vsn
\begin{figure}[h!]
    \centering
    \hspace{-10pt}
    \begin{tikzpicture}
        \node at (0, 0) {\includegraphics[width=0.245\linewidth]{./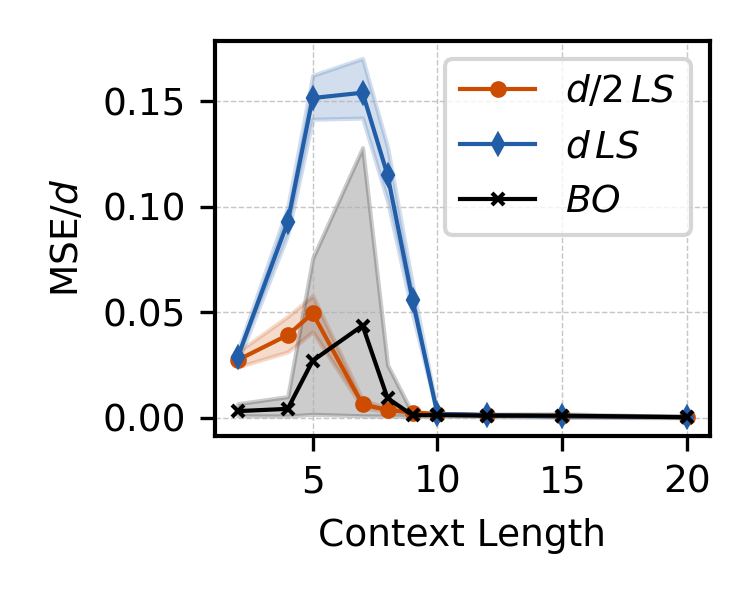}};
        \node[above] at (0.5, 1.4) {\small  $\w = [\w_{d/2}, \bm{0}]$};

        \node[above] at (2.35, 1.85) {\small \textbf{(a) } $d=10$};

        \node[above] at (10.5, 1.85) {\small \textbf{(b) } $d=20$};
        
        \node at (4.00, 0) {\includegraphics[width=0.245\linewidth]{./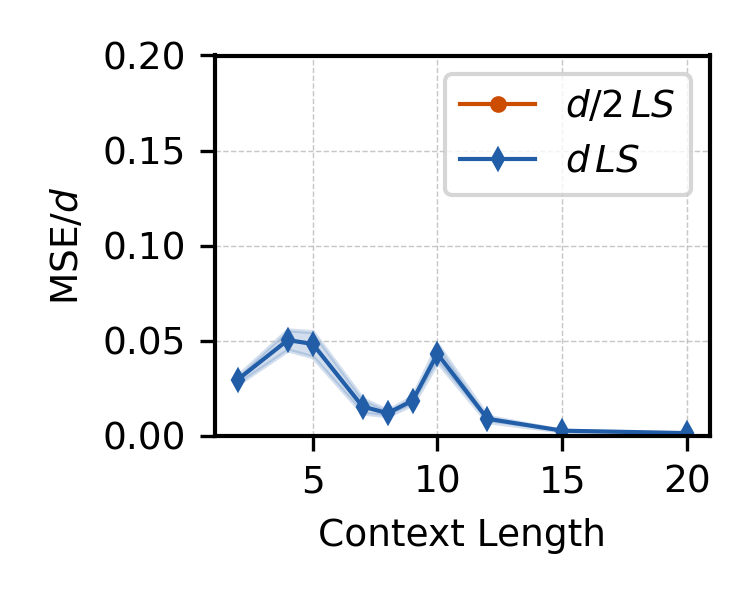}};
        \node[above] at (4.4, 1.4) {\small  $\w = \w_{d}$};
        
        \node at (8.00, 0) {\includegraphics[width=0.245\linewidth]{./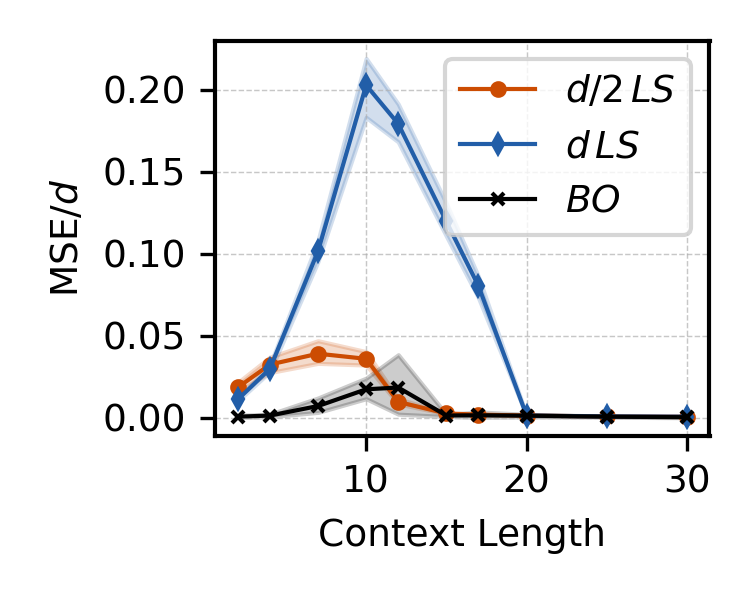}};
        \node[above] at (8.5, 1.4) {\small  $\w = [\w_{d/2}, \bm{0}]$};
        
        \node at (12.00, 0) {\includegraphics[width=0.245\linewidth]{./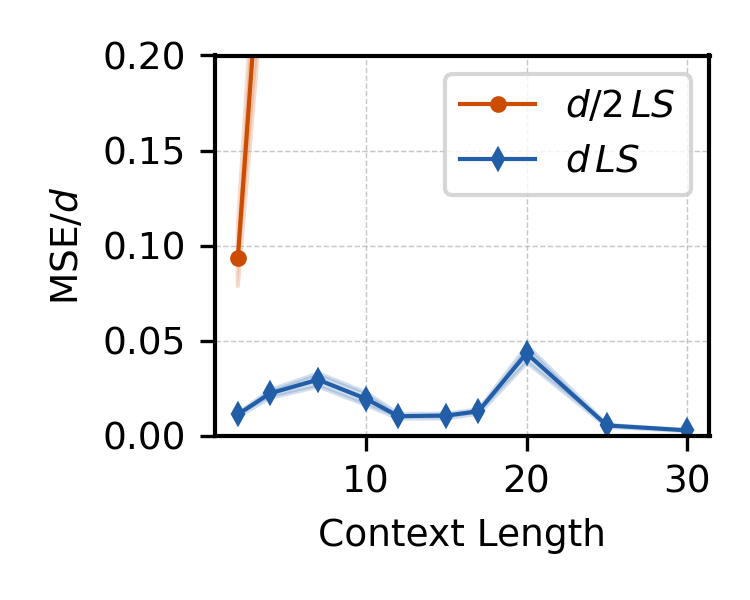}};
        \node[above] at (12.4, 1.4) {\small  $\w = \w_{d}$};
    \end{tikzpicture}

    \vsn
    \caption{ A transformer trained on  $T=39$-long sequences generated using random  \textbf{(a)} $d=10$  or \textbf{(b)} $d=20$ dimensional regressors. We plot $(M_{\theta}([X, \x_{\text{test}}]) - \w^{\top}\xtest)^2/d$ where $\w$ refers to the two benchmark least-squares solutions $\wLSdt$ or $\wLSd$ in \cref{sec:results}. When prompted with unseen sequences of  \textbf{(left)} $d/2$  or  \textbf{(right)} $d$  dimensional regressors, the predictions align most closely with the corresponding LS solution $\wLSdt$ or $\wLSd$. In the right figures, (orange, dot) is off-scale, demonstrating that $\wLSdt$ is a bad fit for sequences generated using $\w_d$. See \cref{app:curriculum} for evaluation over train time.
    }
    \label{fig:linear-regression}
\end{figure}

\cref{fig:linear-regression} demonstrates how transformers distinguish between task complexities in  linear regression. When presented with sequences generated by the complex regressor $\w_{d}$, the transformer's predictions consistently align with the full-dimensional solution $\wLSd$ rather than $\wLSdt$, as expected since the restricted solution lacks sufficient degrees of freedom.
The most interesting finding emerges when the transformer is prompted with sequences from the simple regressor $\w_{d/2}$ in the regime where $d > T \ge d/2$. Here, despite both $\wLSd$ and $\wLSdt$ perfectly interpolating the context data, the transformer's predictions significantly favor $\wLSdt$. This demonstrates that the transformer does not default to the more expressive full-dimensional solution but rather correctly identifies the simpler hypothesis—a clear demonstration of Occam's razor at work. As context length increases into the $T \ge d$ regime, we observe $\wLSd = \wLSdt = \w_{d/2}$, and the transformer essentially recovers the true regressor. 
\subsubsection{Bayesian Interpretation of the Selection Mechanism}
\vsn
We can explain the preference for the simpler hypothesis using a similar Bayesian viewpoint as done in \cref{sec:bayesian_selection} for Markov chains. Specifically, the Bayes' optimal prediction on the test query $\xtest$ given the in-context demonstrations is $y = \wbayes^{\top}\xtest$, where $\wbayes$ is the mixture:
$
\wbayes = \sum_{d' \in \{d/2, \; d\}}p(d' \mid \mA_d, \y)\cdot\tilde{\wLSdp}.
$
Here, $p(d' \mid \mA_d, \y)$ denotes the posterior probability of the $d'$-dimensional regressor given the matrix vector $\mA_d$ and the labels $\y$, and $\Tilde{\wLSdp} = [\wLSdp \,,\, \bm{0}]$. Using Bayes' theorem, we express the posterior probabilities in terms of the likelihoods (assuming a uniform prior on regressor dimensionality):
$
    p(d'\mid \mA_d, \y)
    =
 {p(\y \mid \mA_d, d')}\big/{\sum_{r \in \{d, d/2\} }\,p(\y \mid \mA_d, r)}.
    \label{eq:posterior-over-d}
$
For Gaussian priors over $\w_{d'}$, we can derive analytical expressions for these likelihoods. Similar to our Markov chain analysis, the likelihood incorporates a complexity penalty that favors lower-dimensional models. Consequently, when the underlying regressor generating sequence $X$ is $\w^* = \w_{d/2}$ and $d > T > d/2$, we find that $p(d/2 \mid \mA_d, \y) \gg p(d|\mA_d, \y)$, resulting in $\wbayes \approx \wLSdt$. Conversely, when $\w^* = \w_{d}$, the improved fit outweighs the complexity penalty, giving us $\wbayes = \wLSd$. This Bayesian mechanism explains why transformers implement Occam's razor by selecting the simplest hypothesis in-context. See \cref{app:bo-reg} for details.

\vsn
\subsection{Probabilistic Grammars} 
\vsn
In this section, we validate Occam's razor like inductive bias in transformers trained on probabilistic context-free grammars (PCFGs), which are stochastic systems designed to generate sequences of symbols from specified vocabularies \citep{collins2013pcfg} (see \cref{app:pcfg-des} for a formal description).

\begin{wrapfigure}[20]{r}{0.5\linewidth}
    \centering  \includegraphics[width=\linewidth]{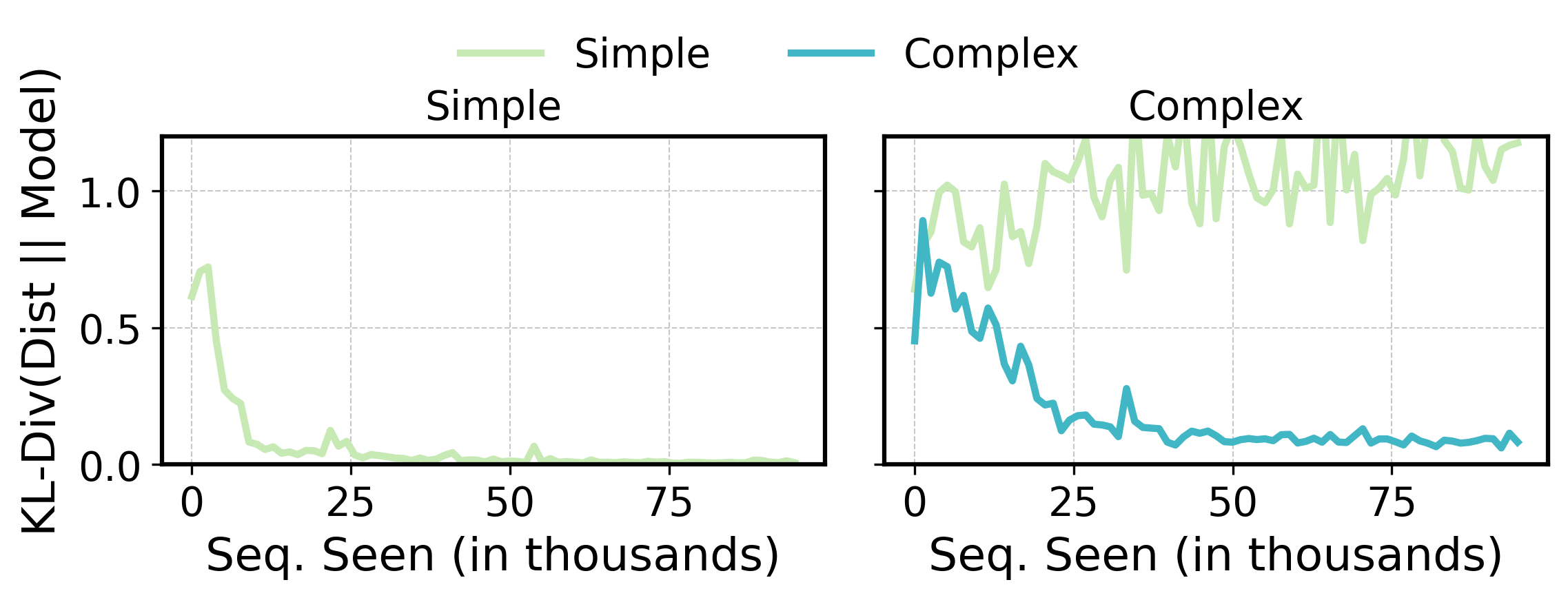}
    \vsn
    \caption{ A transformer trained on sequences from random `complex' and `simple' PCFGs (see \cref{eq:pcfg}) can infer the type of PCFG the context comes from. Here, the test sequences are of the form $\{\dots,\texttt{a}\}$ or $\{\dots,\texttt{b}\}$, where every string before the last element is from the set $\{\texttt{a\,b\,eos, b\,a\,eos, a\,a\,eos, b\,b\,eos}\}$. \textbf{(Left)} 
    compares the KL divergence between model output probabilities and the probabilities of the `simple' PCFG. \textbf{(Right)} compares the KL divergence between model output probabilities and the probabilities of the ground truth `complex' PCFG as well as those of the `simple' PCFG. }
    \label{fig:cfg}
\end{wrapfigure}

 In our setup, non-terminal symbols  $\texttt{N}:=\{\texttt{S},\texttt{A},\texttt{B},\texttt{C}\}$, terminal symbols $\texttt{T}:=\{\texttt{a,b,eos}\}$, where $\texttt{eos}$ corresponds to the end-of-string symbol. 
The `complex' PCFGs use the following set of probabilistic productions:
\vsn
\begin{align}
&\texttt{S} \rightarrow \texttt{ABC} \mid \texttt{BAC} \mid \texttt{AAC} \mid \texttt{BBC} && \text{with probs. } p_1, p_2, p_3, p_4,\nonumber\\[1pt]
&\texttt{A} \rightarrow \texttt{a},\quad \texttt{B} \rightarrow \texttt{b} && 
\label{eq:pcfg}\\[1pt]
&\texttt{C} \rightarrow \texttt{S} \mid \texttt{eos} && \text{with probs. } q_1, q_2.\nonumber
\end{align}
The `simple' PCFGs are a subset of the `complex' PCFGs, where we enforce $p_3=p_4=0$.

To generate sequences from this set of PCFGs, we first draw the probabilities $p_i$ and $q_i$ randomly from a uniform Dirichlet distribution. Then, we keep generating strings from this PCFG (starting from $\texttt{S}$ and following the production rules, till it returns $\texttt{eos}$) and concatenate them till the total sequence length matches the context length $T$. To control the complexity of generated sequences and prevent infinite recursion, we impose a maximum derivation depth $d_{\max}$; once this depth is reached, expansions of the nonterminal 
$\texttt{C}$ are restricted to produce the terminal symbol $\texttt{eos}$. For generating training sequences, we set $d_{\max}=10$, whereas for test sequences, it is set as $0$, \textit{i.e.}, we only get strings from the set $\{\texttt{a\,b\,eos, b\,a\,eos, a\,a\,eos, b\,b\,eos}\}$ with different probabilities. 

In \cref{fig:cfg}, we train a transformer on sequences drawn from the two types of PCFGs. The test sequences are of the form $\{\dots\,\texttt{eos},\texttt{a}\}$ or $\{\dots\,\texttt{eos},\texttt{b}\}$. For sequences from the `complex' grammar, we measure the KL divergence between the transformer's output distribution with the ground truth probabilities of the grammar, as well as those of the `simple' PCFG. Note that since $p_3=p_4=0$ for the `simple' PCFG, $P(\texttt{a}|\texttt{a})=P(\texttt{b}|\texttt{b})=0$. We see that the transformer output distribution is closer to the `complex' PCFG. For sequences from the `simple' grammar, we find that the output distribution closely matches the `simple' PCFG. 
This result shows that the transformer infers the type and parameters of the PCFG from which the context was generated. Here, we use the same context length for train and test sequences; please see \cref{sec:add_expts} for test results with different context lengths.

\vsn
\subsection{Results on a Pre-trained LLM}
\label{sec:llm-expt}
\vsn
In this section, we conduct an experiment with Boolean functions to validate in-context Occam’s razor in a pre-trained LLM (GPT-4). We consider a Boolean input $\x \in \{0,1\}^d$ and two Boolean functions: a `simple' function where label $y_s = \x[0]$ and a `complex' function where label $y_c = \text{Maj}(\x[i_1,i_2,i_3])$, where $\{i_1,i_2,i_3\}\subset [d]$ and $\text{Maj}(\rvz)=1$ if $\sum_i \rvz[i] \geq 3/2$ and $0$ otherwise, for $\rvz=\x[i_1,i_2,i_3]$. For instance, let $d=5$, and $i_1,i_2,i_3$ be $1, 3, 4$, respectively. Then, some example sequence-label pairs are shown in \cref{fig:intro}.  

\begin{figure}[h!]
    \centering
    \includegraphics[width=0.75\linewidth]{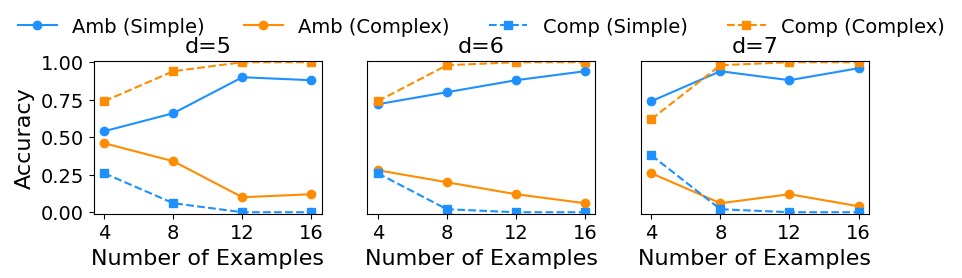}
    \caption{ Results with GPT-4 for in-context learning Boolean functions with two types of prompts. We consider $y_{\text{simple}}=\x[0]$ and $y_{\text{complex}}=\text{Maj}(\x[1,3,4])$. Comparing accuracy with respect to the simple and complex functions on unambiguous queries, we see that the model exhibits in-context Occam’s razor: it uses the simple function when given ambiguous examples and the complex function when the context can only be explained by the complex function.}
    \label{fig:gpt4}
\end{figure}

We consider prompts with $n$ examples $(\x_i,y_i)$ followed by a query $\x_{n+1}$ in the format shown in \cref{fig:intro}. To construct these examples, we first sample the indices $i_1,i_2,i_3$ randomly from $[d]$ (this set is fixed for one prompt). We consider two types of prompts, one with ambiguous examples, where the labels predicted by both functions are the same, \textit{i.e.}, $y_s=y_c$, to check which function the model prefers, and the other with examples from the complex function, with labels $y_c\neq y_s$, to check if the model can in-context learn the complex function as well. In both cases, the query is unambiguous, \textit{i.e.}, $y_s\neq y_c$, to check whether the model uses the simple or the complex function to predict the final output. 

We prompt GPT-4 with these prompts and compare the output’s accuracy (where the output is the token with the largest log probability) with respect to the simple and the complex functions, over $50$ prompts. The results for different numbers of examples are shown in \cref{fig:gpt4}. We consider $d=5, 6, 7$. We observe that on prompts with ambiguous examples which can be explained by both the functions, the model outputs on the unambiguous query match with the simple function. On the other hand, on prompts with examples that can be explained only by the complex function, it gets higher agreement with the complex function on the query. 

These results confirm that GPT-4 exhibits in-context Occam’s razor in this setting with Boolean functions~ — when two Boolean functions explain the context, the model uses the simpler one to make predictions on the query input.

\vsn
\section{Additional Experimental Results}
\label{sec:add_expts}
\vsn

\subsection{Training with Only the Complex Task} 
\vsn
In this section, we train the transformer on fixed order-$k$ Markov chains (following \cite{statistica_induction_heads,algorithmic_phases}) to examine whether at inference time, it can infer the correct order when presented with sequences generated from a lower-order chain ($<k$). In \cref{fig:fixed-ord}, we test transformers trained on only order-3 (left) and order-2 (right) chain sequences on sequences from order-1 chains and find that the model predictions don't match bigram statistics most closely. This finding is crucial, as it suggests that a transformer trained on an order-$k$ chain learns only the order-$k$ statistics of the context and does not generalize to lower-order statistics. We repeat this experiment for the case of linear regression (see \cref{fig:fixed-d} in the Appendix) and observe a similar behaviour. 
\begin{figure}[h!]
    \centering
    \includegraphics[width=0.95\linewidth]{./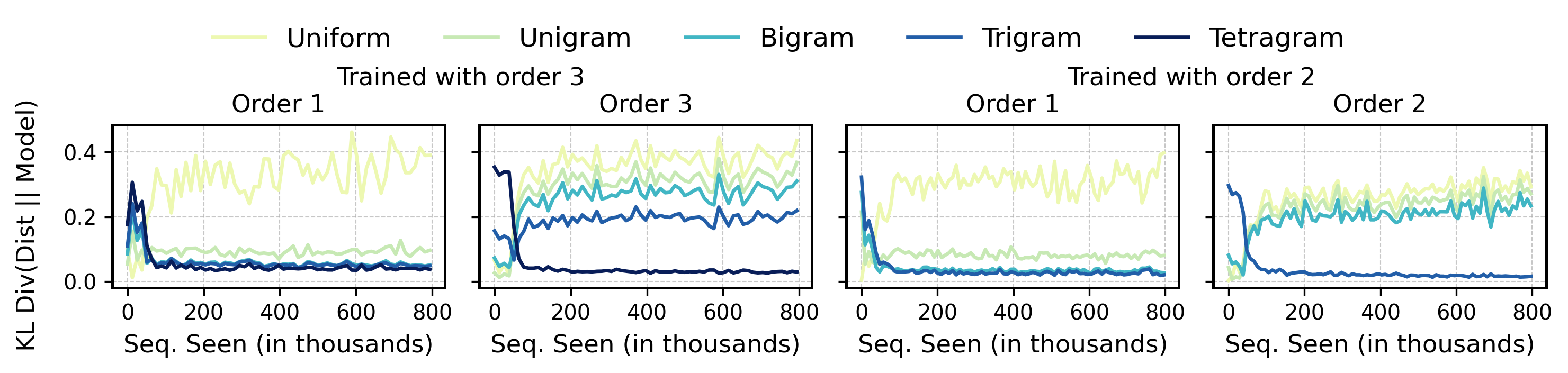}
    \caption{ \textbf{Transformer trained on only the complex task does not exhibit Occam's razor-like inductive bias.} A transformer trained on 
    random order-$3$ (\textbf{left column}) or order-$2$ (right column) Markov chains can only predict based on order-$3$ (left column) or order-$2$ (\textbf{right column}) statistics, and fails to predict based on order-1 statistics when given order-$1$ in-context sequences.}
    \label{fig:fixed-ord}
\end{figure}

\begin{figure}[h!]
    \centering
    \includegraphics[width=\linewidth]{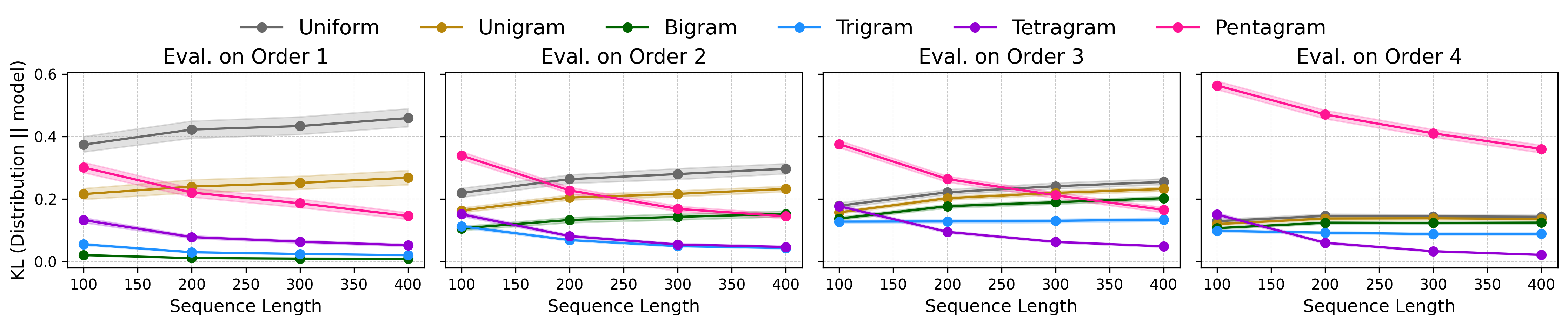}
    \caption{ \textbf{Effect of context length on inference behaviour.} Figure shows a trained transformer’s performance when prompted with sequences
from different-order chains as a function of the sequence length. This transformer was trained on sequences from
order-$1$ and order-$3$ chains with sequence length $400$. Here, we also evaluate its performance
on order-$2$ and order-$4$ sequences, which were not seen during training.
    }
    \label{fig:markov-ctx-len}
\end{figure}

\vsn
\subsection{Testing with Different Orders and Context Lengths}
\vsn
In this section, we test how varying the context length affects a trained transformer's behaviour on sequences from different task categories. In \cref{fig:markov-ctx-len}, we take a transformer trained only on order-$1$ and order-$3$ chains with sequence length $400$, and evaluate it on orders-$1$, $2$, $3$, and $4$ sequences. For each evaluation, we measure the KL divergence of the model's output compared to uniform/ bigram/ trigram/ tetragram/ pentagram statistics for varying prompt sequence lengths. 

On order-1 sequences, KL(bigram $\|$ model) stays essentially flat and low across different context lengths, indicating a consistent fit to the simplest statistic. In contrast, KL(tetragram $\|$ model) is large for short contexts (where higher and lower-order statistics are less similar), and steadily drops as the context grows (as expected by \cref{eq:order-k}). Importantly, the clear gap for short contexts further validates our hypothesis that the model prioritizes the simpler explanation.

Comparing results for different orders for sequence length $\geq 200$, we find that consistent with the main results in \cref{sec:results}, for order-$1$ sequences, the KL divergence to bigram is the lowest, and for order-$3$, the KL to tetragram is the lowest. For order-4 sequences, we find that the model does not learn pentagram statistics, but rather returns output based on tetragram statistics. However, for order-$2$ sequences, which again the model was never explicitly trained on, the behavior is significantly more nuanced --- while the model avoids the simplest explanation (bigram), it does not strongly commit to either the more complex explanation (tetragram) or a specific intermediate complexity it has not been directly trained on (trigram). See \cref{app:ctx_len} for further discussion and additional results.

\begin{figure}[h!]
    \centering
    \includegraphics[width=\linewidth]{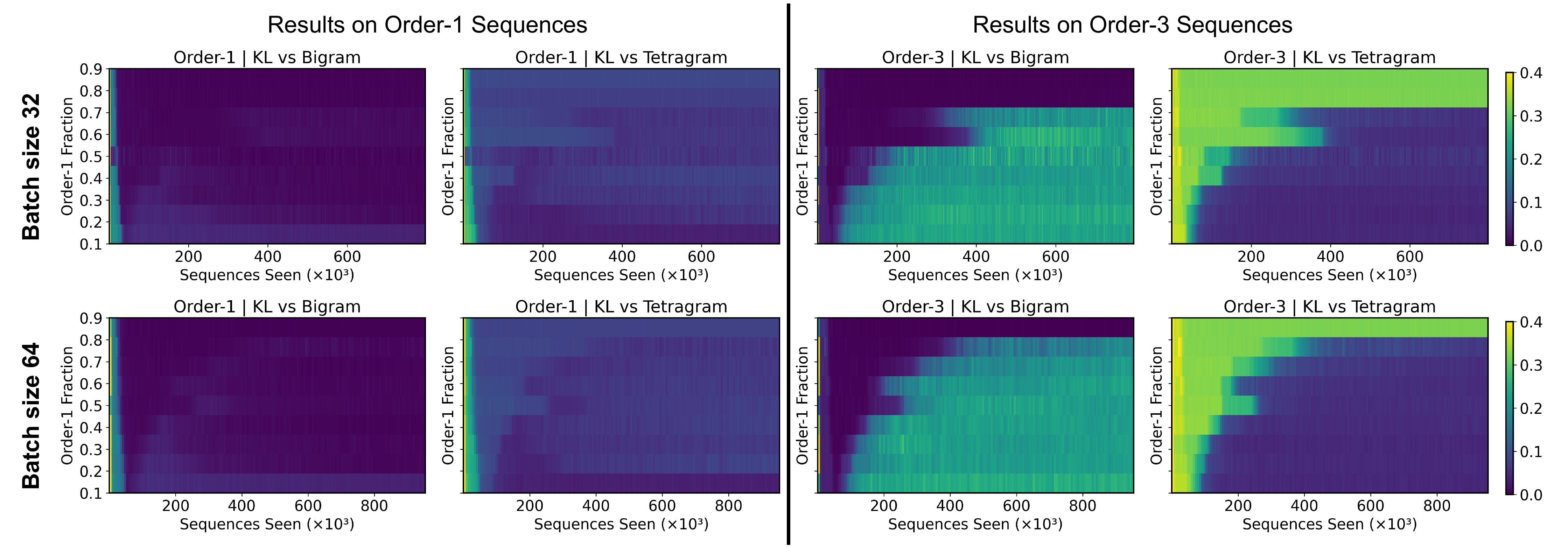}
    \caption{\textbf{Influence of training mixture proportion on inference behaviour.} 
    Each heat-map shows the KL divergence between the model’s output distribution and the indicated $n$-gram baseline as training progresses (horizontal axis), for different fractions of order-1 sequences in the mini-batches (vertical axis) using batch size $32$ \textbf{(top row)} and $64$ \textbf{(bottom row)}. We find that even when this fraction is very low, the model learns bigram statistics for order-1 sequences. Increasing this fraction makes the learning of tetragram statistics for order-3 sequences increasingly difficult, but increasing the batch size helps offset this effect. }
    \label{fig:proportion}
\end{figure}

\subsection{Effect of Training Mixture Proportion}

In \cref{fig:proportion}, we train transformers while systematically varying the fraction of order-$1$ and order-$3$ sequences in each mini-batch (order-1 fraction $\in[0.1,0.9]$). For each setting, we evaluate at inference time: (i)  KL(bigram$\|$model) and (ii) KL(tetragram$\|$model) on both order-$1$ and order-$3$ test sequences. 

We find that the transformer reliably learns bigram statistics for order-$1$ sequences, even when order-$1$ examples make up a small fraction of the training mix. In contrast, learning tetragram statistics for order-$3$ sequences becomes increasingly difficult as the fraction of order-$1$ sequences rises --- under such skew, the model tends to default to bigram-like behaviour. Moreover, the number of training steps required to acquire tetragram statistics increases as the training distribution becomes more imbalanced.

Increasing the batch size helps offset this effect: larger batches ensure that more order-$3$ sequences are seen per step in absolute terms, which raises the threshold at which the model can still learn the correct higher-order statistics.
\begin{figure}[h!]
    \centering
    \includegraphics[width=\linewidth]{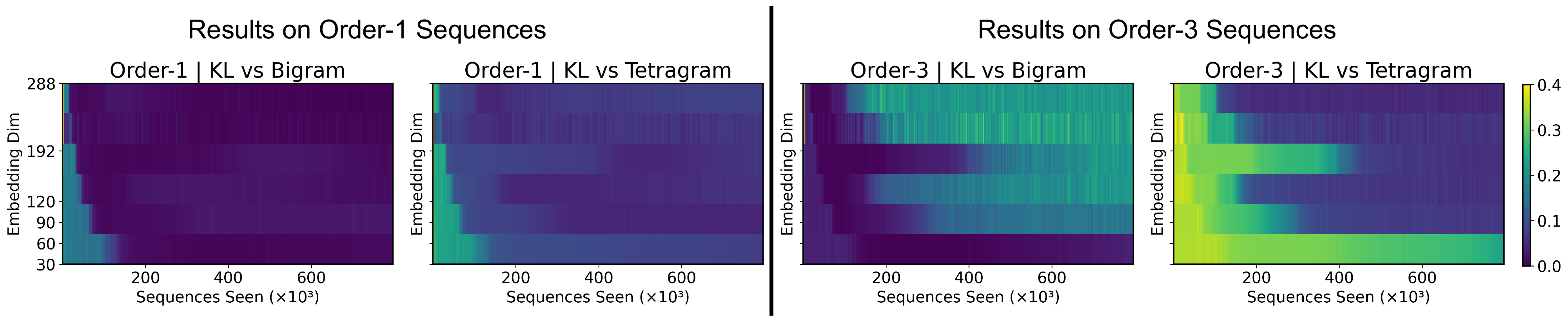}
    \caption{\textbf{Effect of model scale on inference behaviour.} 
    Each heatmap shows the KL divergence between the model’s output distribution and the indicated $n$-gram baseline as training progresses (horizontal axis), across different embedding dimensions (vertical axis). We report results for models with 6 layers, 6 heads. We observe that with embedding dimension 30, the model successfully learns bigram statistics on order-1 sequences, but fails to learn tetragram statistics on order-3 sequences. In contrast, all models with embeddimg dimension $>$ 30 eventually learn both. We also observe a consistent trend: as overparameterization increases, the training time required to acquire both order-1 and order-3 statistics decreases.
    }
    \label{fig:scale-main}
\end{figure}

\begin{wrapfigure}[18]{r}{0.5\linewidth}
    \centering
    \vsn \vspace{-5mm}\includegraphics[width=0.75\linewidth]{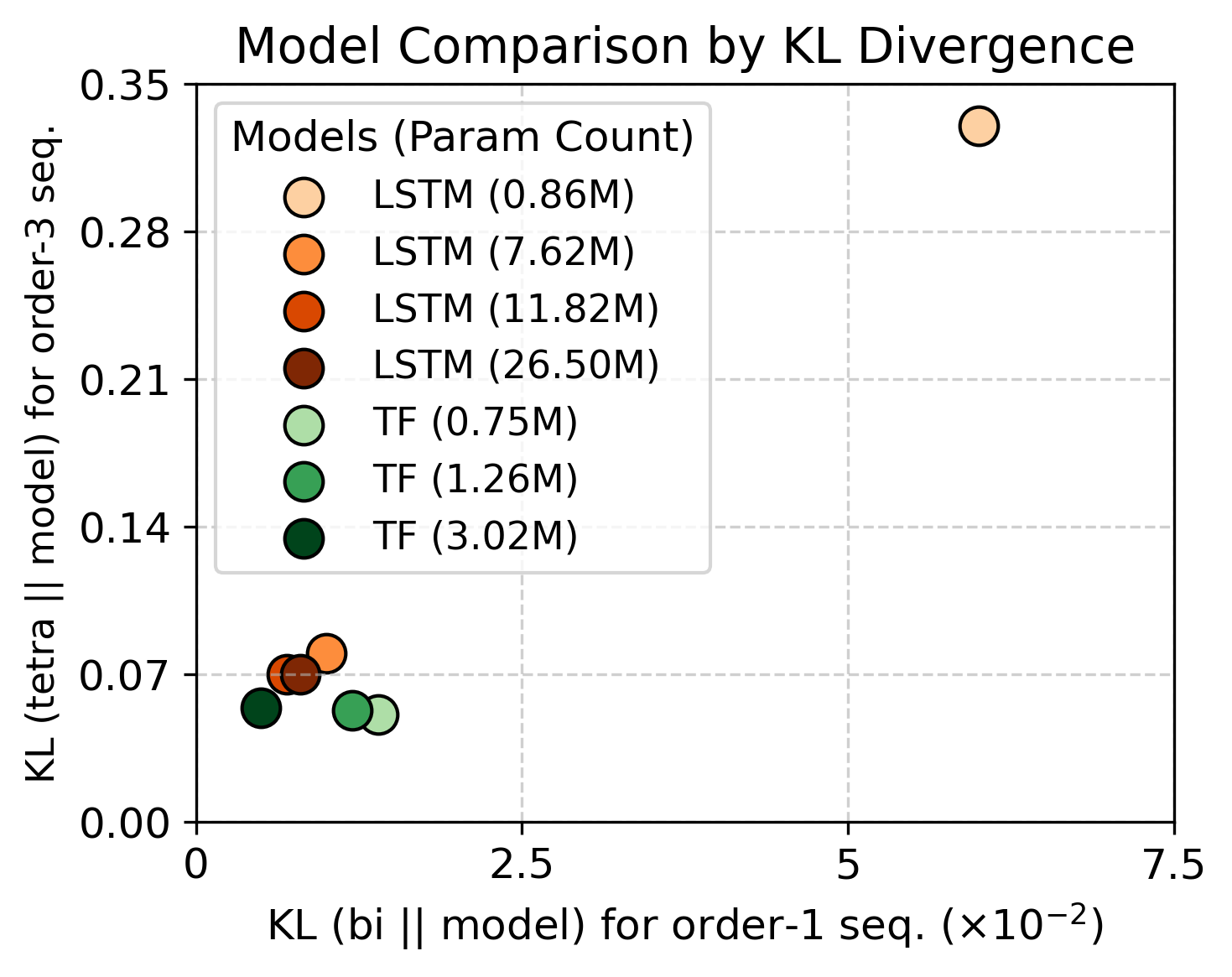}
    \caption{ 
    \textbf{Comparison with LSTMs.} Comparison between LSTMs and transformers of different capacities, trained on a mixture of order-1 and order-3 chains, and tested on unseen sequences at the end of training. We observe that LSTM models require significantly larger capacity than transformers to infer the true order (1 or 3) and predict using the corresponding statistics (bigram or tetragram) at test time. 
    }
    \label{fig:lstm}
\end{wrapfigure}
\subsection{Effect of Model Scale}
In \cref{fig:scale-main}, we present results for larger transformer models by increasing the embedding dimension in a 6 layer, 6 head transformer model. We find that larger models also exhibit the Occam’s razor-like inductive bias, and converge faster as the number of parameters is increased. We also observe this trend in \cref{app:model_scale}, where we include additional results by varying the number of layers, attention heads, and the embedding dimension. 

\vsn
\subsection{Results for LSTMs} 
\vsn

In this section, we train stacked LSTMs to investigate if they exhibit in-context Occam's razor as we saw in transformers. From \cref{fig:lstm}, we find that LSTMs exhibit a weaker form of Occam’s razor-like inductive bias, as they require significantly more capacity to consistently select the correct underlying Markov chain order from the input sequence at inference time compared to transformers. See \cref{app:lstms} for inference results with training time for the models in \cref{fig:lstm}, and  \cref{app:expt-settings} for further training details.

\vspace{-0.1in}
\section{Conclusion}
\vspace{-0.1in}
\label{sec:conclusion}
{This work demonstrates that transformers trained on hierarchical task complexity structures effectively implement a form of Bayesian Occam's razor, consistently identifying the simplest sufficient hypothesis without defaulting to more expressive model classes.
Our finding raises several important questions: From a mechanistic perspective, how do transformers internally implement this complexity selection? What specific circuit components enable this capability? From an optimization standpoint, how do the dynamics of gradient descent lead to the emergence of these Bayesian selection principles during training? Future work could investigate more complex hierarchical structures beyond simple dimensionality differences, such as models with different structural constraints (e.g., various sparsity patterns in regression) or tasks where complexity varies along multiple dimensions simultaneously.
Our findings suggest that principled hypothesis selection may be an inherent inductive bias property of transformers trained on diverse task distributions, potentially contributing to their remarkable generalization capabilities in real-world settings.}

\section*{Acknowledgments}
This work was partially funded by the NSERC Discovery Grant No. 2021-03677, the Alliance Grant ALLRP 581098-22, and a CIFAR AI Catalyst Grant. PD and TB were also supported by the UBC 4YF Doctoral Fellowship. BV was supported by NSF CAREER Award CCF-2239265. This work was done in part while BV was visiting the Simons Institute for the Theory of Computing. The authors also acknowledge use of the
Sockeye cluster by UBC Advanced Research Computing and the Discovery cluster by USC CARC. The authors thank Core Francisco Park for useful discussions and the anonymous reviewers for helpful feedback.

\bibliography{refs}
\bibliographystyle{apalike}
\newpage
\appendix
\section*{Appendix}
\startcontents[appendix]
\addcontentsline{toc}{chapter}{Appendix}
\renewcommand{\thesection}{\Alph{section}} 
\printcontents[appendix]{}{1}{\setcounter{tocdepth}{3}}
\setcounter{section}{0}
\section{Additional Experimental Results}
\label{app:results}

\subsection{Effect of Curriculum Training} \label {app:curriculum}
We used curriculum learning in the linear regression setting, following previous linear-regression ICL studies (e.g., \citet{garg_icl} and follow ups). To test its impact, we reran the experiment without any curriculum (\cref{fig:curr}, left). In that setting, the transformer eventually converges to the $d/2$-dimensional least-squares solution, but it takes substantially longer than with curriculum scheduling (\cref{fig:curr}, right). 
\begin{figure}[h!]
    \centering
    \includegraphics[width=0.35\linewidth]{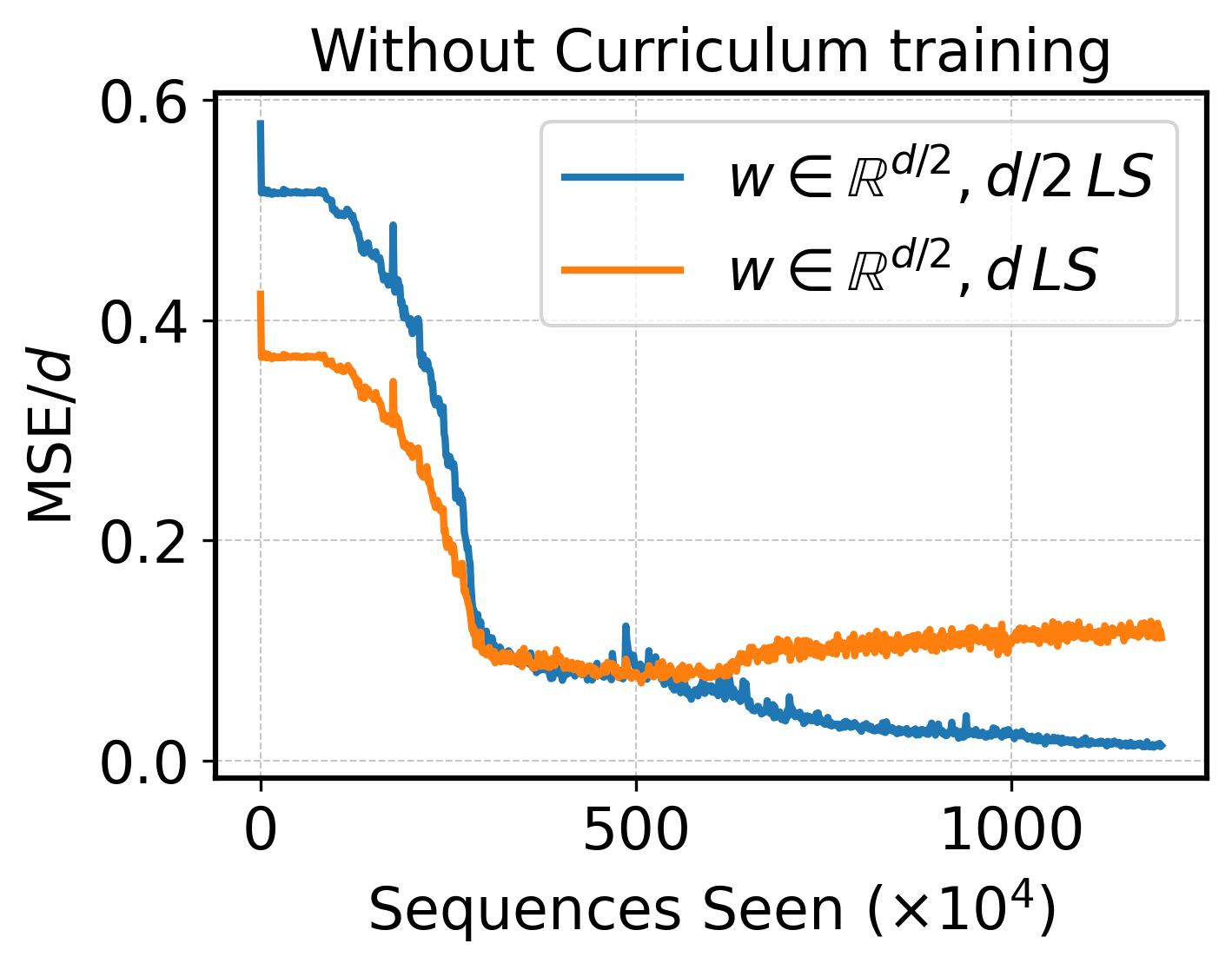}\hspace{5mm}\includegraphics[width=0.35\linewidth]{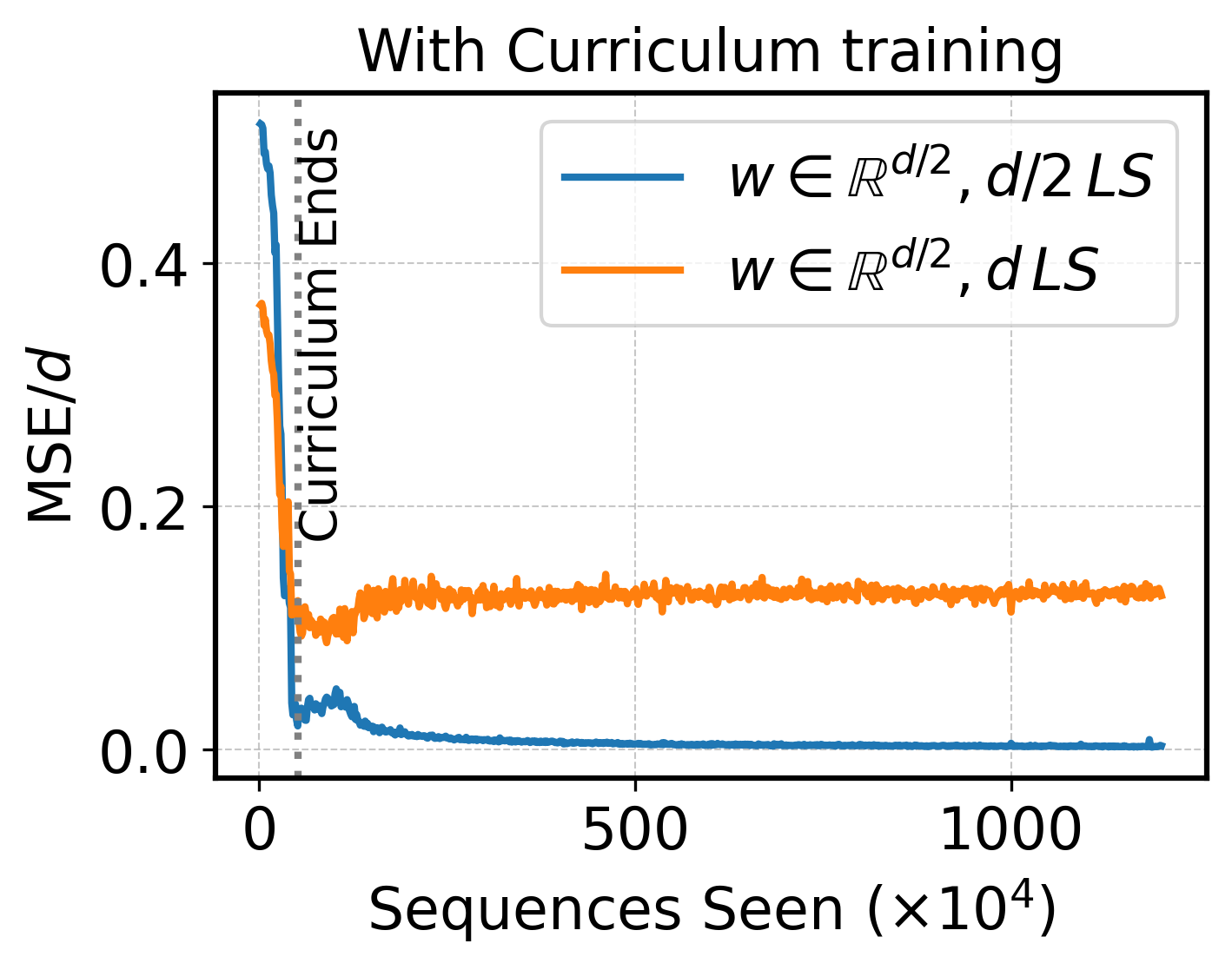}
    \caption{Normalized MSE (normalized by $d$) between the transformer’s predictions and the two benchmark solutions—$d/2$-LS (simple) and $d$-LS (complex)—over training steps. Models are trained with context length $T=40$, with $d=20$ (same setting as \cref{fig:linear-regression}) and evaluated for sequences with $T_{\text{test}}=15$.  \textbf{(Left)} Model trained without curriculum scheduling on equal proportions of $d/2$- and $d$-dimensional regressors in every mini-batch. \textbf{(Right)} Curriculum schedule used in \cref{fig:linear-regression}.}
    \label{fig:curr}
\end{figure}

Specifically, we consider the following curriculum scheduling. Training begins with sequences from $d/4$-dimensional regressors. Every $2000$ steps, the dimensionality of regressors in the entire mini-batch is increased by $2$ until it reaches $d/2$. From that point onward, the dimensionality of regressors in half of each batch remains at $d/2$, while the other half continues to increase by $2$ every $2000$ steps until it reaches $d$; thereafter, batches contain equal proportions of $d/2$ and $d$. This curriculum markedly accelerates convergence to the simpler $d/2$-LS predictor.
\subsection{Training with only the complex task} 

\begin{figure}[h!] 
    \centering
    \includegraphics[width=0.35\linewidth]{./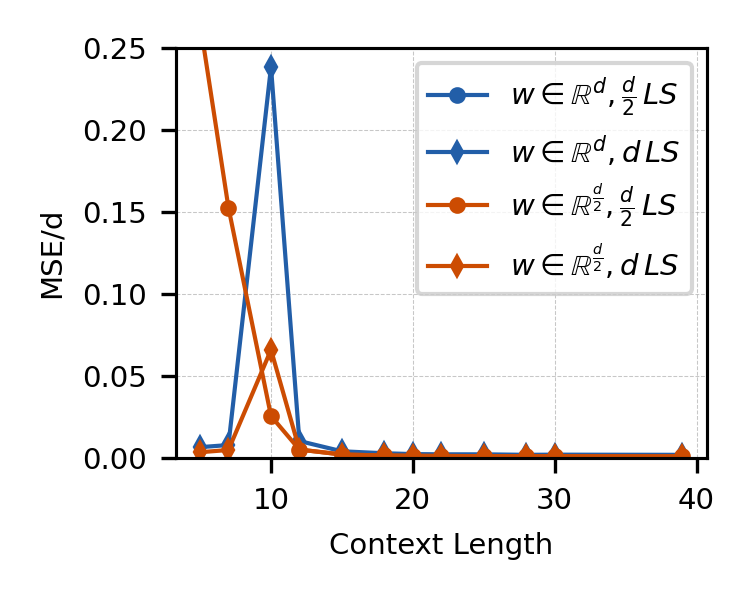}
    \caption{A transformer trained on $X =[\x_1, y_1, \x_2, y_2, ..., \x_T, y_T]$ sequences generated using fixed complexity, random $d$-dimensional regressors $\w_d$ (see \cref{sec:results} for details). We plot $(M_{\theta}([X, \x_{\text{test}}]) - \w_{\text{bench}}^{\top}\x_{\text{test}})^2/d$ where $\w_{\text{bench}}$ refers to two benchmark least-squares solutions in $d$ or $d/2$ dimensional space described in \cref{sec:results}. We see that the transformer's predictions align most closely with $\w_{d}^{\text{LS}}$ no matter the type of inference-time regressor used to generate the sequences ($\w_d$ or $\w_{d/2})$. This indicates that the transformer doesn't learn to estimate the lower-complexity solution $\w_{d/2}^{\text{LS}}$, unlike the case in \cref{fig:linear-regression}. Here, $T=39$ and $d=10$. Also the first curve (blue, dot) is out of the plotted range.}
    \label{fig:fixed-d}
\end{figure}

\subsection{Testing with different context lengths} \label{app:ctx_len}
In this section, we test how varying the context length affects a trained transformer's behaviour on sequences from different task categories in the Markov chain setting and the PCFG setting.

\begin{figure}[h!]
    \centering
    \includegraphics[width=\linewidth]{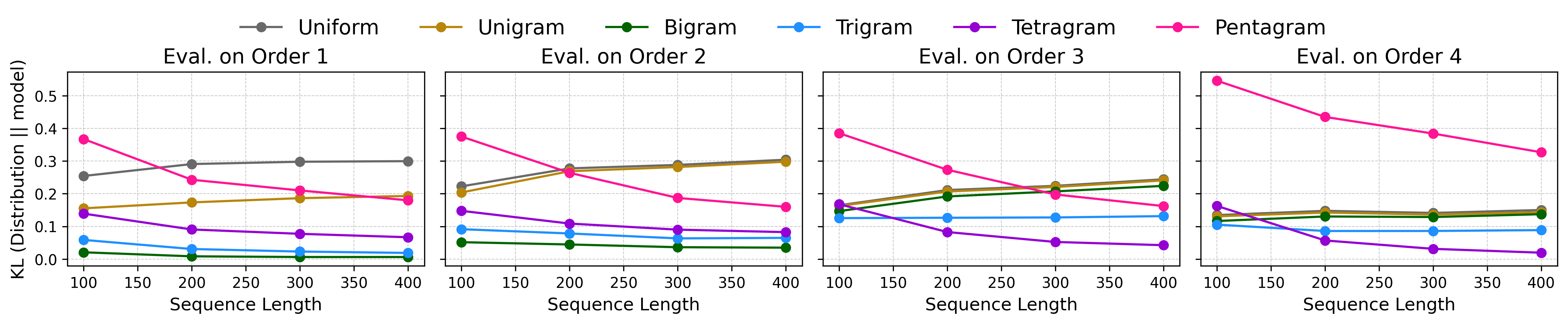}
    \caption{Figure illustrates how varying the
context length affects a trained transformer’s performance when prompted with sequences
from different-order chains at inference. This transformer was trained on sequences from
order-$1$ and order-$2$ chains with sequence length $200$. The results are
averaged over $15$ sequences each from $250$ transition matrices.}
    \label{fig:markov-ctx-len2}
\end{figure}

\begin{figure}[h!]
    \centering
    \includegraphics[width=0.55\linewidth]{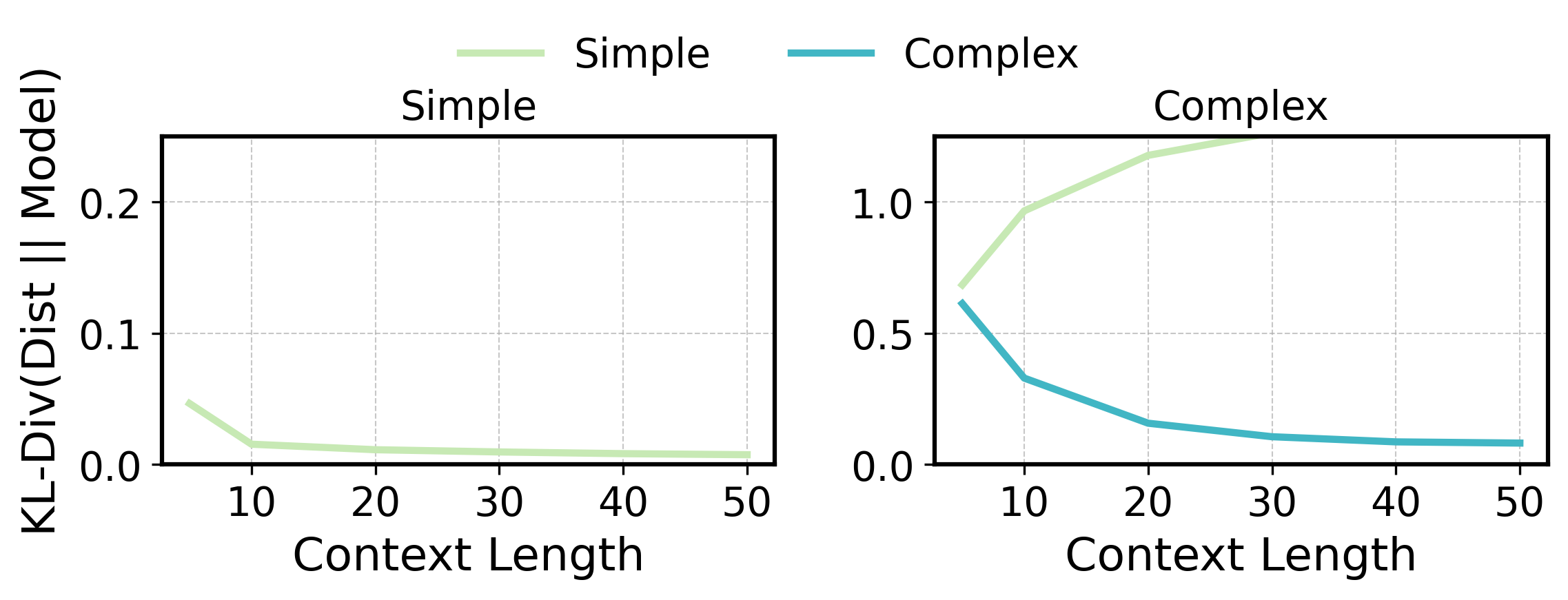}
    \caption{Figure illustrates the performance of a trained transformer when prompted with sequences of varying context lengths from the two grammars. The context length during training was $50$ in this case. We find that the transformer can identify the correct type of grammar for slightly shorter context lengths, but the performance starts deteriorating for very short context lengths.}
    \label{fig:pcfg-ctx-len}
\end{figure}

\begin{figure}[h!]
    \centering
    \includegraphics[width=\linewidth]{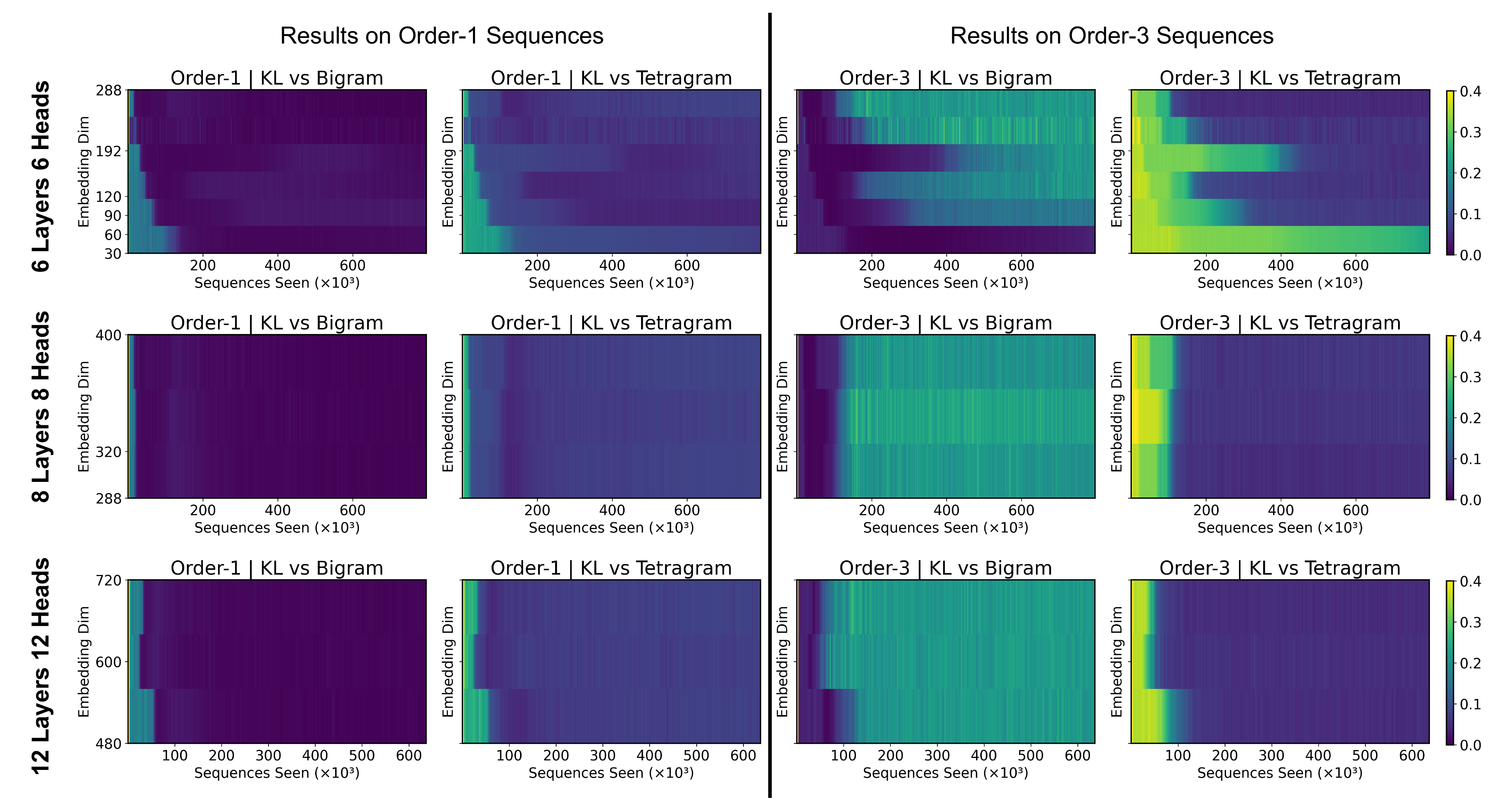}
    \caption{\textbf{Effect of model scale on inference behaviour.} 
    Each heatmap shows the KL divergence between the model’s output distribution and the indicated $n$-gram baseline as training progresses (horizontal axis), across different embedding dimensions (vertical axis). We report results for models with 6 layers, 6 heads (\textbf{top row}, used to report main results in paper), 8 layers, 8 heads \textbf{(middle row)}, and 12 layers, 12 heads \textbf{(bottom row)}. Darker/blue regions indicate lower KL divergence; brighter/yellow regions indicate higher KL. In each row, the left two heatmaps show evaluation on order-1 test sequences, and the right two on order-3 test sequences. For the 6-layer, 6-head model \textbf{(top)}, we observe that with embedding dimension 30, the model successfully learns bigram statistics on order-1 sequences, but fails to learn tetragram statistics on order-3 sequences. In contrast, all models with embeddimg dimension $>$ 30 eventually learn both. We also observe a consistent trend: as overparameterization increases, the time required to acquire both order-1 and order-3 statistics decreases. This trend also appears, though less prominently, for the deeper and wider models (8L, 8H and 12L, 12H).
    }
    \label{fig:scale}
\end{figure}

In \cref{fig:markov-ctx-len}, we take a transformer trained only on order-$1$ and order-$3$ chains with sequence length $400$, and evaluate it on orders-$1$, $2$, $3$, and $4$ sequences. For a concrete quantification serving as a baseline for behaviors under order-$2$ and order-$4$, if we fix sequence length $=300$, the relative gap between the lowest KL values and the second lowest (a larger gap indicates more confident prediction) are $1.618$ for order-$1$ and $1.074$ for order-$3$ sequences. For order-4 sequences, we find that the model does not learn pentagram statistics, but rather returns output based on tetragram statistics. Now, moving to order-$2$ sequences, which again the model was never explicitly trained on, the relative gap of KL distances to trigram and tetragram is $0.1073$, significantly smaller than the gaps reported above for order-$1$ and order-$3$. 

In \cref{fig:markov-ctx-len2}, we evaluate the model in \cref{fig:order12} on different context lengths at the end of training. We find that the model predictions most closely match with bigram for order-1 and trigram for order-2 sequences, respectively, as expected. We also find that the gap between bigram and trigram on order-$1$ sequences increases as context length becomes smaller. This is consistent with the observations in \cref{fig:markov-ctx-len}.

In \cref{fig:pcfg-ctx-len}, we  evaluate the model in \cref{fig:cfg} (trained on PCFGs) on different context lengths at the end of training. We find that for both simple and complex grammars, the performance deteriorates when the context length is (very) small. 

\begin{figure}[h!]
    \centering
    \includegraphics[width=0.95\linewidth]{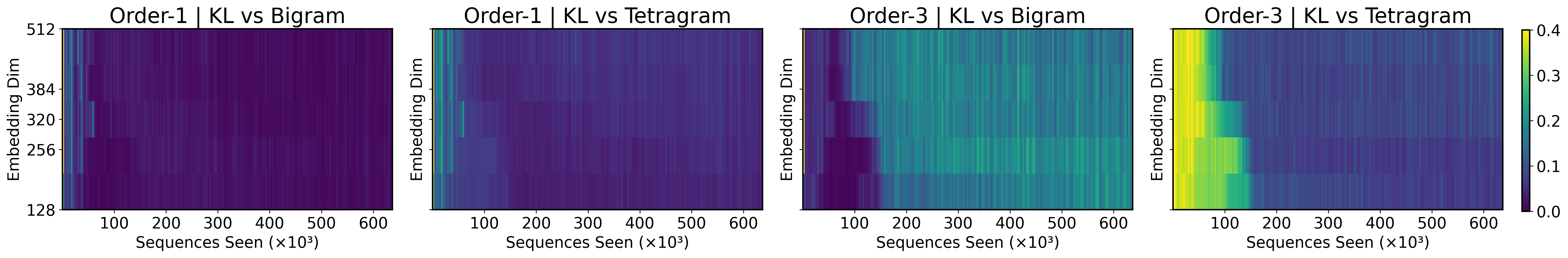}
    \caption{Repetition of the experiment in \cref{fig:main-fig} when training on a mixture of order-$1$ and order-$3$ chains with a $2$ layer transformer, with $n$ heads in the first layer, $1$ head in the second layer, and embedding dimension $\textit{dim} = 16n$. We observe that for order-3 sequences, the model output probabilities are not as closely aligned to tetragram statistics as seen in deeper models in \cref{fig:scale}. However, increasing the model size via scaling up the number of heads (and embedding dimension) similarly speeds up the time required to learn order-1 and order-3 statistics.}
    \label{fig:ord13-2l}
\end{figure}   

\begin{figure}[h!]
    \centering
    \includegraphics[width=0.95\linewidth]{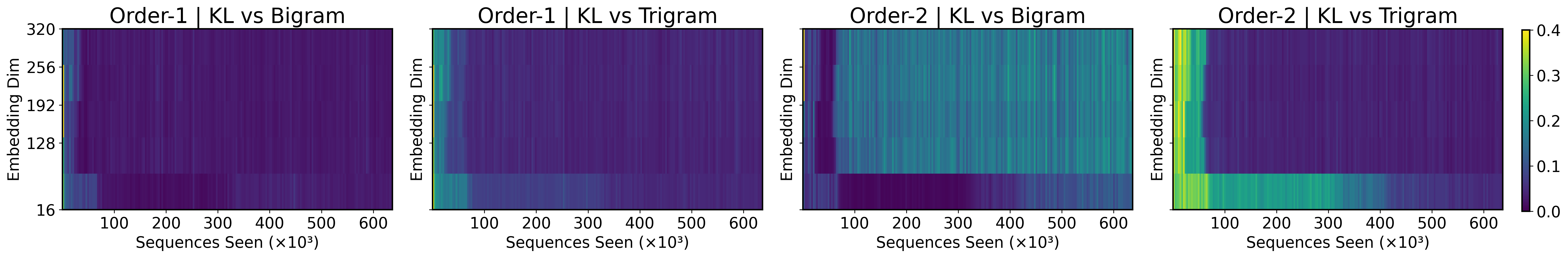}
    \caption{Repetition of the experiment in \cref{fig:order12} when training on a mixture of order-$1$ and order-$2$ chains with a $2$ layer transformer, with $n$ heads in the first layer, $1$ head in the second layer, and embedding dimension $\textit{dim} = 16n$. We observe that increasing the model size speeds up the time required to learn order-1 and order-2 statistics, similar to the experiments with deeper models in \cref{fig:scale} and two layer models in \cref{fig:ord13-2l}.}
    \label{fig:ord12-2l}
\end{figure}

\subsection{Effect of Model Scale}
\label{app:model_scale}
In \cref{fig:scale} we repeat the experiments in \cref{fig:scale-main} for 8 layer, 8 heads and 12 layer, 12 heads models by varying the embedding dimension. In \cref{fig:ord13-2l,fig:ord12-2l}, we repeat the experiments in \cref{fig:main-fig,fig:order12}, respectively, with a 2 layer transformer model, varying the embedding dimension and number of heads in the first layer, while keeping the heads in the second layer fixed to 1. We find that across all settings, the transformer infers the correct Markov chain order from the input context. We also observe that scaling the model size speeds up convergence.

\subsection{Results for LSTMs}
\label{app:lstms}

\cref{fig:lstm-app} shows training time evaluation of the LSTM models shown in \cref{fig:lstm} and comparison with transformer models. 

\begin{figure}[h!]
    \centering
     \includegraphics[width=\linewidth]{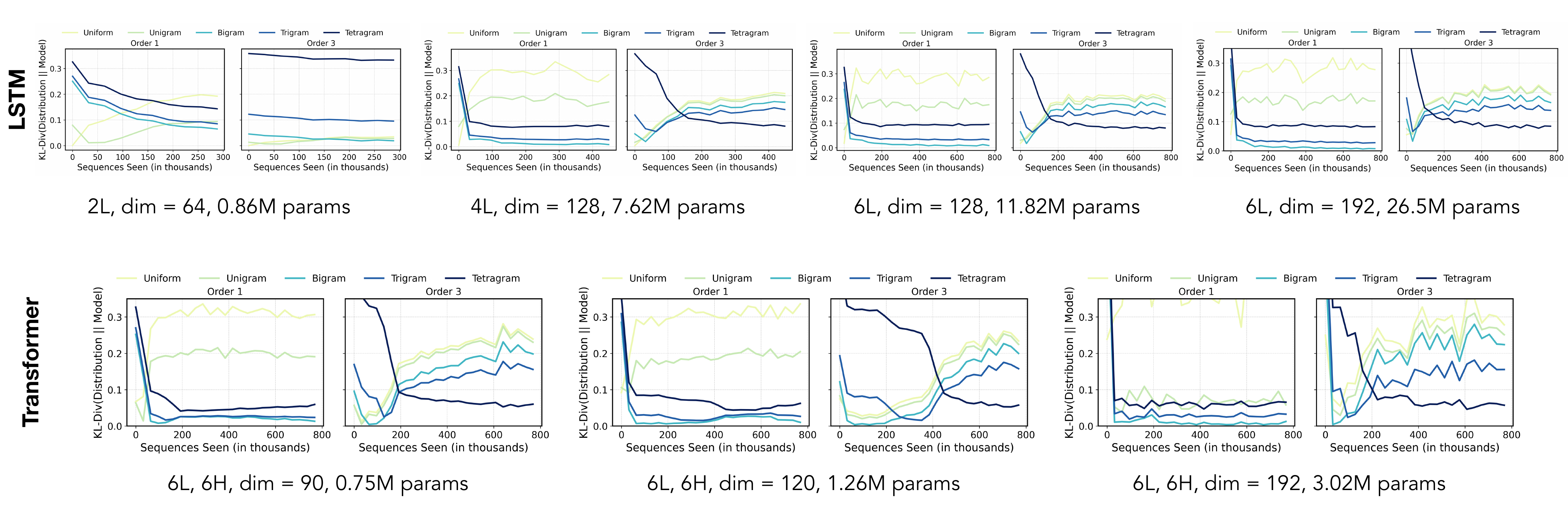}
    \caption{The plots show the KL divergence between the model’s output and various $n$-gram statistics for order-1 \textbf{(left)} and order-3 \textbf{(right)} sequences, as a function of training time, across LSTM models of varying depth and width reported in \cref{fig:lstm}. For direct comparison, results for a $6$-layer, $6$-head transformer (dim$=192$) are shown in the bottom-right (reproduced from \cref{fig:main-fig}).
    }
    \label{fig:lstm-app}
\end{figure}

\section{Additional Details for \cref{sec:results}}
\subsection{Markov chains: Omitted details for asymptotics of likelihood}
\label{app:mc-al}

Using a BIC style Laplace approximation to the marginal likelihood \citep{schwarz1978estimating,ghosal2017fundamentals,csiszar2006context}, we have
\begin{equation*}
    \log p(X\mid \ord)
    \;\approx\;
    \log p\bigl(X\mid \hat{P}_\ord\bigr)
    \;-\;
    \tfrac{d_\ord}{2}\,\log T
\end{equation*}
Here, $\hat{P}_\ord$ denotes the parameters of the order-$\ord$ chain that maximize the likelihood of the observed sequence $X$. $d_\ord$ is the effective dimension of an order $\ord$ chain, concretely, \ $d_\ord = V^\ord\,(V-1)$. For an order-$\ord$ chain, we know that $p\bigl(X\mid P_\ord\bigr) = \prod_t  p_{P_\ord}(x_t \mid x_{t-1}, ..., x_{t-s})$, and the parameters $\hat{P}_s$ that maximize this are the empirical conditional probabilities derived from the observed sequence, i.e. $p_{P_\ord}(x_t \mid x_{t-1}, ..., x_{t-\ord}) = \hat{p}(x_t \mid x_{t-1}, ..., x_{t-\ord})$ for each context of length $\ord$. Formally, $\hat{p}_{X}(x_t \mid x_{t-1}, ..., x_{t-\ord})=\frac{n_{x_{t-\ord},\ldots,x_{t-1},x_{t}}(X)}{n_{x_{t-\ord},\ldots,x_{t-1}}(X)}$, where $n_{x_i,\ldots,x_j}(X)$ denotes the number of symbol transitions $x_i\rightarrow \ldots \rightarrow x_j$ in the sequence $X$.

\subsection{Linear Regression: Omitted training details}
\label{app:reg-tr-details}
Training sequences with $T$ in-context examples are generated as follows: we first sample $s$ uniformly from the set $\text{dim} = \{d/2,d\}$, then generate $\w_s\sim\Nc(\bm{0},\I_s)$. For task dimension $s=d/2$, we effectively use $\w = [\w_s, \bm{0}]$ (zero-padded to dimension $d$). We then sample feature vectors $\x_t\sim\Nc(\bm{0},\I_d)$ and generate the labels $y_t = \w^{\top}\x_t$ for $t \in [T]$, forming sequences $X_{\leq t} = [\x_1, y_1, \x_2, y_2, ..., \x_t, y_t]$. We use this to train a transformer $M_{\theta}$ to auto-regressively predict the label $y_{t+1}$ using the first $t$ pairs of examples $X_{\leq t}$ by minimizing the population square-loss (analogous to \eqref{eq:trainloss_square}).
\begin{equation}\label{eq:trainloss_square_order}
    L(\theta): = \mathop{\mathbb{E}}_{\substack{\x_{1:T} \sim \Nc(\bm{0}, \I_{d})\\ \w_s \sim \Nc(\bm{0}, \I_{s}) \\ s \sim \text{Unif}(\text{dim})}}  
    \sum_{t=1}^T\ell(M_\theta(X_{\leq t};\xtest), y_{t+1}), 
\end{equation}
where $\ell$ is the squared loss. 

\subsection{Bayes' Optimal for Linear Regression}
\label{app:bo-reg}
In this section, we characterize the Bayes' optimal parameter $\wbayes$ under the mixture of regressors described in \cref{sec:varying_complexity_regression}. For square loss, the Bayes optimal parameter is the posterior mean of the $\w$ conditioned on the sequence $X$, i.e. $\wbayes = \E[\w \mid \mA_d, \y]$, where $\mA_d$ is the matrix of vectors and $\y$ is the vector of labels derived from the sequence $X$.

In order to find this posterior mean, we need the conditional distribution $p(\w \mid \mA_d, \y)$. By Bayes’ rule, 
\begin{equation}\label{eq:bayes_rule_linear}
p(\w \mid \mA_d, \y)
\;=\;
\frac{p(\y \mid \mA_d, \w)\,p(\w)}{\int p(\y \mid X, \w')\,p(\w')\,\mathrm d\w'}.    
\end{equation}

Under uniform prior on the regressor dimensionality, the overall prior $p(\w) = \frac{1}{2} p_{d/2}(\w) + \frac{1}{2} p_d(\w)$, where $p_{d/2}(\w)$, $p_d(\w) \sim \Nc(\bm{0}, \I_{d})$ denote the respective priors over the $d/2$ and $d$ dimensional regressors. Here, $p_{d/2}(\w)$ corresponds to a Gaussian $\Nc(\bm{0}, \I_{d/2})$ over the first $d/2$ components of $\w$ with zeroes in the remaining $d/2$. Plugging this into \cref{eq:bayes_rule_linear}, and using the fact the $\wLSdt$ and $\wLSd$ are the Bayes' optimal solutions under their respective priors, we get a mixture of solutions
\begin{equation*}
\E[\w \mid \mA_d, \y] = \frac{L_{d/2}}{L_{d/2} + L_d}\tilde{\wLSdt} + \frac{L_d}{L_{d/2}+L_{d}}\tilde{\wLSd},
\end{equation*}

where $L_{d'} = p(\y|\mA_d, d')$ denote the marginal likelihood of $\y$ under the regressor of dimensionality $d'$. Notice the term $\frac{L_{d'}}{\sum_r L_r} = p(d' | \mA_d, \y)$ using Bayes' rule and uniform prior over the dimensionality of regressor. Thus, the Bayes' optimal is the following posterior-weighted combination of the respective LS solutions

\[
\wbayes = \;
\sum_{d' \in \{d/2, \; d\}}p(d' \mid \mA_d, \y)\cdot\tilde{\wLSdp}.
\]

\paragraph{Expressions for $L_{1}$ and $L_{2}$.}
The marginal likelihood $L_{d'}$ under the noiseless linear regression setting is
\[
L_{d'} 
\;=\;
\int \delta\bigl(\y - \mA_d\,\w\bigr)\,p_{d'}(\w)\,\mathrm d\w.
\]
This can be interpreted as the ``density'' of $\y$ when viewed as a random variable 
$\mA_d\,\w$ for $\w\sim p_{d'}$. Concretely, if the sequence generating regressor $\w^* \sim p_d(\w)$, then $L_{d/2} = 0$ (a.s.), as no $\w \sim p_{d/2}$ solves $\y = \mA_d \w$. Hence, $\wbayes = \wLSd$.

\paragraph{Bayesian Occam's Razor.} Now consider the other case when $\w^* \sim p_{d/2}(\w)$. Here, regressors from either prior can "explain" the context and will have non-zero marginal likelihood. Consider the regime $d> T \ge d/2$, the density of $\y$ under $p_{d/2}$ is 
\begin{align*}
L_{d/2} 
\;&=\; 
\frac{1}{(2\pi)^{d/4}\,\sqrt{\det(\mA_{d/2}^\top\,\mA_{d/2})}} 
\exp\!\Bigl(-\tfrac{1}{2}\,\bigl\|\wLSdt\bigr\|^2\Bigr), \\
L_d\;&=\; 
\frac{1}{(2\pi)^{T/2}\,\sqrt{\det(\mA_{d}\,\mA_{d}^\top)}} 
\exp\!\Bigl(-\tfrac{1}{2}\,\bigl\|\wLSd\bigr\|^2\Bigr).
\end{align*}
In order to find the ratio of the above marginal likelihoods, we need to compute the difference
\begin{equation*}
    \Delta_{d}:=  \log \det(\mA_d\mA_d^\top) - \log \det(\mA_{d/2}^\top \mA_{d/2}).
\end{equation*}
In order to compute $\Delta_{d}$, we first look at its expectation. Later, we use a concentration argument to get a high probability bound on $\Delta_{d}$.

\paragraph{Expectation of $\Delta_{d}$.} As $\mA_d$ has i.i.d. Gaussian entries, both $\mA_d\mA_d^\top$ and $\mA_{d/2}^\top \mA_{d/2}$ follow Wishart distributions. It is known (e.g., \citet[in Prop. A.1]{zwiernik2016maximumlikelihoodestimationlinear}) that
\begin{align*}
    \mathbb{E} \log \det(\mA_d\mA_d^\top) &= \psi_T(d/2) + T \log 2, \\
    \mathbb{E} \log \det(\mA_{d/2}^\top \mA_{d/2}) &= \psi_{d/2}(T/2) + (d/2) \log 2,
\end{align*}
where $\psi_p$ is the multivariate digamma function. 
InWe will first calculate the difference  It further holds that
\begin{equation}\label{eq:digamma_diff}
    \psi_T(d/2) - \psi_{d/2}(T/2) = \sum_{i=1}^{T} \psi\left(\tfrac{d+1-i}{2}\right) - \sum_{j=1}^{d/2} \psi\left(\tfrac{T+1-j}{2}\right),
\end{equation}
where $\psi$ is the digamma function. The asymptotic expansion of the digamma function gives us
\begin{equation}\label{eq:digamma_asym}
    \psi(z) = \log(z) - \tfrac{1}{2z} + O(z^{-2}), \qquad |arg(z)| < \pi, z \to \infty.
\end{equation}
Let $z_i = d+1-i$ and $z_j' = T + 1-j$. Using \cref{eq:digamma_asym} in \cref{eq:digamma_diff} when $T, d \to \infty$, and ignoring the second order terms, we have
\begin{align}\label{eq:term-12}
     \psi_T(d/2) - \psi_{d/2}(T/2) &= \sum_{i=1}^{T} \Big[\log(z_i) - \tfrac{1}{2z_i}\Big] - \sum_{j=1}^{d/2} \Big[\log(z_j' - \tfrac{1}{2z_j'}) \Big] - (T-d/2)\log 2\nonumber \\
     &= \underbrace{\log \Big(\tfrac{\prod_{i=1}^{T} z_i}{\prod_{j=1}^{d/2}z_j'}\Big)}_{\text{Term-1 }} + \underbrace{\tfrac{1}{2}\sum_{j=1}^{d/2}\tfrac{1}{z_j'} - \tfrac{1}{2}\sum_{i=1}^{T}\tfrac{1}{z_i}}_{\text{Term-2 }}  - (T-d/2)\log 2
\end{align}
Term-1 can be calculated as
\begin{align*}
    &\log \Big(\tfrac{\prod_{i=1}^{T} z_i}{\prod_{j=1}^{d/2}z_j'}\Big)=\log \left(\tfrac{d!(T-d/2)!}{T!(d-T)!}\right)\\
    &=d\log(d)-d+(T-d/2)\log(T-d/2)-(T-d/2)-T\log(T)+T-(d-T)\log(d-T)+d-T\\
    &=(d\log d)(c-1/2) + d\underbrace{\Big((c-1/2)(\log (c-1/2) - 1) - c\log c - (1-c)\log(1-c)\Big)}_{g(c)},
\end{align*}
where the second equality follows by using Stirling's approximation, and neglecting the logarithmic terms. The third equality follows by using $T = cd$, where $c \in (1/2, 1)$. It can be shown that $g(c)$ is a bounded function in $c \in (1/2,1)$. We can easily see that it is a monotonically decreasing function 
\begin{equation*}
    g'(c) = \log\big(\tfrac{(c-1/2) (1-c)}{c}\big) < 0, \quad c \in (1/2,1).
\end{equation*}
Using this we have, $g(1) \leq g(c) \leq g(1/2)$, where $g(1/2) = \log 2$ and $g(1) = -\tfrac{1}2{}(\log 2 + 1)$. Therefore, we can write Term-1 as

\begin{equation}\label{eq:term-1}
    \log \Big(\tfrac{\prod_{i=1}^{T} z_i}{\prod_{j=1}^{d/2}z_j'}\Big) = (d\log d)(c-1/2) + \Theta(d).
\end{equation}
Term-2 can be bounded as 
\begin{align*}
    \tfrac{1}{2}\sum_{j=1}^{d/2}\tfrac{1}{z_j'} - \tfrac{1}{2}\sum_{i=1}^{T}\tfrac{1}{z_i}&\leq \tfrac{1}{2}\tfrac{d}{2}\tfrac{1}{T+1-d/2}-\tfrac{1}{2}\tfrac{T}{d}=\tfrac{1}{2}\left(\tfrac{1/2}{c-1/2+1/d}-c\right)=O(1),\\
    \tfrac{1}{2}\sum_{j=1}^{d/2}\tfrac{1}{z_j'} - \tfrac{1}{2}\sum_{i=1}^{T}\tfrac{1}{z_i}&\geq \tfrac{1}{2}\tfrac{d}{2}\tfrac{1}{T}-\tfrac{1}{2}\tfrac{T}{d+1-T}=\tfrac{1}{2}\left(\tfrac{1}{2c}-\tfrac{c}{1-c+1/d}\right)=o(1),
\end{align*}
where we use $T = cd$, where $c \in (1/2, 1)$.

We can simplify \cref{eq:term-12} by using \cref{eq:term-1} with only the dominant $d\log d$ term and ignoring $\Theta(1)$ Term-2. Then, 
using this difference in the multivariate digamma functions, we have 
\begin{equation}\label{eq:delta-d-exp}
\mathbb{E} [\Delta_{d}] = \mathbb{E} \log \det(\mA_d\mA_d^\top) - \mathbb{E} \log \det(\mA_{d/2}^\top \mA_{d/2}) =  \Theta(d\log d).
\end{equation}

\paragraph{High probability bound on $\Delta_d$.} \citet[Thm. 1]{cai2015law} establish a \emph{central-limit theorem} (CLT)
for the log–determinant of a high-dimensional sample covariance matrix.
Specialised to our (un-normalised) Gram matrix $\mA_d\mA_d^{\top}$, their result gives  
\begin{equation*}
  \frac{\log\det\bigl(\mA_d\mA_d^{\top}\bigr)\,-\,\E[\log\det(\mA_d\mA_d^{\top})]}
       {\sigma_{d,T}}
  \;\xrightarrow{d\to\infty}\;\mathcal N(0,1),
  \qquad
  \sigma_{d,T}^{2}
  \;=\;
  \sum_{i=1}^{d}\psi_{1}\!\Bigl(\tfrac{T-i+1}{2}\Bigr),
  \label{eq:clt_d}
\end{equation*}
where $\psi_{1}$ is the trigamma function.  
Exactly the same argument applied to the $\mA_{d/2}^\top\mA_{d/2}$ yields  
\begin{equation*}
\frac{\log\det\bigl(\mA_{d/2}^{\top}\mA_{d/2}\bigr)\,-\,\E[\log\det(\mA_{d/2}^{\top}\mA_{d/2})]}
       {\sigma_{T,d/2}}
  \;\xrightarrow{T\to\infty}\;\mathcal N(0,1),
  \qquad
  \sigma_{T,d/2}^{2}
  \;=\;
  \sum_{j=1}^{d/2}\psi_{1}\!\Bigl(\tfrac{T-j+1}{2}\Bigr).
  \label{eq:clt_d2}
\end{equation*}

\medskip
\noindent
Each centred log–determinant is
sub-Gaussian with variance proxy $\sigma_{d, T}, \sigma_{T, d/2} = O(1)$. More concretely,
\begin{equation*}
    \Pr\Bigl(
      \bigl|
        \log\det(\mA_{d}\mA_{d}^{\top})
       -\E\log\det(\mA_{d}\mA_{d}^{\top})
      \bigr|\;>\;t
     \Bigr)
  \;\le\;
  2\exp\bigl(-c\,t^{2}\bigr),
  \quad
  \text{and likewise for}\mA_{d/2}\mA .
\end{equation*}
Choosing $t=C\log d$ and applying a union bound gives, for a suitable
constant $C>0$,
\begin{align*}
  \Pr\Bigl(
     \Big\{\bigl|
       &\log\det(\mA_{d}\mA_{d}^{\top})
       -\E\log\det(\mA_{d}\mA_{d}^{\top})
     \bigr|
     \;\le\;C\log d\Big\} \\
     &\text{and} \, \, \Big\{\bigl|
       \log\det(\mA_{d/2}^{\top}\mA_{d/2})
       -\E\log\det(\mA_{d/2}^{\top}\mA_{d/2})
     \bigr|
     \;\le\;C\log d\Big\}
   \Bigr)
  \;\ge\;1-\tfrac{2}{d}.
\end{align*}
Hence, the difference satisfies
\begin{equation}
  \bigl|\Delta_{d}-\E[\Delta_{d}]\bigr|
  \;\le\;2C\log d
  \quad\text{with probability }1-\tfrac{2}{d}.
\label{eq:del-d-hp}
\end{equation}
Combining \cref{eq:del-d-hp} with the expectation difference computed in \cref{eq:delta-d-exp} yields
\begin{equation*}
   \frac{\det(\mA_{d}\mA_{d}^{\top})}
        {\det(\mA_{d/2}^{\top}\mA_{d/2})}
   \;=\;
   \exp\bigl(\E[\Delta_{d}]\bigr)\;
   \exp\bigl(O(\log d)\bigr)
   \;\ge\; d^{\,c'd},
\end{equation*}
with probability at least $1-2/d$ and some constant $c'>0$ which is dependent on
the ratio $c=T/d\in(\tfrac12,1)$. \\

At the same time, it is easy to show that $\|\wLSd\|^2 - \|\wLSdt\|^2 >0$. Put together, we have

\begin{equation*}
    L_{d/2}/L_d \gtrsim (2\pi)^{T/2-d/4} d^{c'd}
\end{equation*}
which is $\gg1$ for large $d$ and $T > d/2$. This implies that, $p(d/2 \mid \mA_d, \y) \approx 1$, and $\wbayes \approx \wLSdt$. 

\subsection{PCFGs}
\label{app:pcfg-des}
Formally, a PCFG is described by the tuple $\{\texttt{N}, \texttt{T}, \texttt{R}, \texttt{S}, \pi\}$, where $\texttt{N}$ is a set of nonterminal symbols, $\texttt{T}$ is a set of terminal symbols, and $\texttt{R}$ is a collection of production rules that dictate how nonterminals are expanded into sequences of terminals and nonterminals. A special start symbol $\texttt{S}$ initiates the generation process, and $\pi$ assigns probabilities to these production rules, indicating their likelihood of selection during sequence generation.

\section{Details of Experimental Settings and Code}
\label{app:expt-settings} For experiments on Markov chains, linear regression and PCFGs, we train GPT-2 type decoder-only transformer \citep{minGPT}. We use AdamW \citep{adamW} optimizer in all experiments with a learning rate of $10^{-4}$, unless stated otherwise. We set the batch size as $32$. 
\paragraph{Markov chains.}

For all Markov chain experiments, vocab size $V=3$. For the order-$1$ and order-$3$ experiments in \cref{fig:main-fig}, context length $T$ for all sequences was set $300$. We used a $6$ layer, $6$ head transformer, with embedding dimension set to $192$.  For the order-$1$ and order-$2$ experiments in \cref{fig:order12}, context length $T$ was set to $200$. We used a $2$ layer transformer with $20$ heads, and embedding dimension was set to $320$. 
Additionally, we used relative position encoding in all the experiments. For the fixed-order experiments in \cref{fig:fixed-ord}, we used the setup from the corresponding variable order experiment. 

\paragraph{Linear regression.} We used a $12$ layer, $8$ head transformer with embedding dimension $256$ similar to \cite{garg_icl}. 
We set $d=10$ and $20$ (for \cref{fig:linear-regression}), and context length $T=39$. 

\paragraph{PCFGs.} We used a $4$ layer, $4$ head transformer with embedding dimension $128$, and context length of $50$.

\paragraph{LSTMs.} All LSTM models were trained with the AdamW optimizer (learning rate $2\times 10^{-3}$, $\beta$s equal to ($0.9$, $0.99$)) and cosine learning rate decay with a minimum learning rate of $10^{-5}$. In all experiments, we set hidden dimension to be $4$ times the embedding dimension.

\paragraph{Evaluation details.} The results in \cref{fig:markov-ctx-len} and \cref{fig:markov-ctx-len2} are
averaged over $15$ sequences each from $250$ transition matrices reported with $95\%$ bootstrapped confidence intervals. The results in \cref{fig:linear-regression}, \cref{fig:fixed-d} are averaged across $2000$ contexts, each generated with an independently sampled ground-truth regressor. Shaded bands show $95\%$ confidence intervals obtained from $100$ bootstrap trials.

\paragraph{Code.} The code is available at \url{https://github.com/puneesh00/ICL-Bayes-Occam.git}

\section{Construction for Markov Chains of Varying Complexity}\label{sec:construction}
Recall from \cref{eq:bayes mixture chain} that the model learns tasks of different complexity by implementing the Bayes optimal strategy. To achieve this, the model must compute several key quantities, including the \(s\)-gram statistics \(p(\cdot \mid x_T, \dots, x_{T-s+1})\) and the category posterior.

Previous work has shown that a two-layer attention-only model with \(s\) heads in the first layer can compute the relevant \(s\)-gram statistics for the last token \citep{statistica_induction_heads, rajaraman2024transformers}. To extend those results, for order one statistics, we demonstrate that by using \(V\) heads in the second layer, the model can represent all the conditional distributions \(p(u \mid v)\) for any pair of vocabulary tokens simultaneously. These conditional distributions are the building blocks that produce the log-likelihood used to compute the posterior.
This approach can be extended to higher-order Markov chains by using \(V^s\) heads in the second layer.

\begin{lemma}\label{lem:attention_only} 
A two-layer attention-only transformer with one attention head in the first layer, and $V$ attention heads in the second layer can output \emph{all} the empirical conditional distributions 
\[
    p(u \mid v) = \frac{\sum_{i=2}^T\mathds{1}(x_{i-1}=v,x_i=u)}{\sum_{i=2}^T \mathds{1}(x_{i-1}=v)} \quad \text{for any } u, v \in [V].
\]
for any input sequence $X$.
\end{lemma}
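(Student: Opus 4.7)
The plan is to extend the two-layer induction-head construction of \citet{statistica_induction_heads} so that instead of producing only the single bigram row $p(\cdot \mid x_T)$ at the final position, the second layer produces the entire conditional table $\{p(\cdot \mid v)\}_{v\in[V]}$ in parallel across its $V$ heads. The first layer plays exactly the same role as in the classical induction-head circuit; the novelty lies in how the $V$ heads in the second layer are parameterized.

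For layer~1, I would use the standard \emph{previous-token head}: a single attention head whose queries/keys use the positional encoding to force every position $i$ to attend (nearly) exclusively to position $i-1$, with the value simply copying the one-hot token embedding. Combined with the skip connection, the representation at position $i$ then stores $e_{x_i}$ and $e_{x_{i-1}}$ in disjoint subspaces of the embedding, so each token carries the pair $(x_i, x_{i-1})$.

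For layer~2, I would dedicate one head to each vocabulary symbol $v\in[V]$. For head $v$, the query projection $W_Q^{(v)}$ is chosen so that the query is the \emph{constant} vector pointing along coordinate $v$ of the ``previous-token'' subspace, independent of the input to the target position. (Since the architecture is attention-only, the constant is realized either through a bias term in $W_Q$ or by reserving one embedding coordinate that is identically $1$.) The key projection $W_K^{(v)}$ extracts the previous-token subspace, so the pre-softmax score at source position $i$ equals a large positive constant when $x_{i-1}=v$ and $0$ otherwise. The value projection $W_V^{(v)}$ reads out the current-token subspace, returning $e_{x_i}$. With the logit scale chosen large enough, the softmax concentrates uniformly on the set $S_v=\{i : x_{i-1}=v\}$, and the head output equals $|S_v|^{-1}\sum_{i\in S_v} e_{x_i}$, which is precisely the row $p(\cdot\mid v)$ of the lemma. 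Stacking the $V$ heads yields all rows simultaneously.

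The main technical obstacle will be making the ``hard attention'' step rigorous: since the softmax is never exactly $\{0,1\}$-valued, the argument needs a quantitative bound showing that with large enough logit scale (polynomial in $\log T$ suffices) the induced distribution is uniformly $\varepsilon$-close to uniform on $S_v$, yielding an $\varepsilon$-approximation to $p(\cdot\mid v)$ for every $v$ simultaneously. A secondary wrinkle is the degenerate case $S_v=\emptyset$, where the empirical conditional is undefined; here the head returns an arbitrary vector, but this is harmless for the downstream Bayesian mixture of \cref{eq:bayes mixture chain} since such rows are weighted by zero empirical counts. The higher-order extension sketched in the paper ($V^s$ heads for order-$s$ statistics) follows by the same template, using an $s$-step previous-token stack in the first layer and one head per context $(v_1,\dots,v_s)\in[V]^s$ in the second layer.
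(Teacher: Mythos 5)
Your proposal is correct and follows essentially the same construction as the paper: a single previous-token head in layer one that writes $e_{x_{i-1}}$ into a reserved subspace alongside $e_{x_i}$, and then $V$ heads in layer two, each with an input-independent query along $e_v$ (the paper realizes this constant via $W_Q$ of the form $\bm{1}\,e_v^\top$ acting on the one-hot-valued middle block, analogous to your "reserved constant coordinate" idea), a key that matches positions with $x_{i-1}=v$, and a value that averages $e_{x_i}$ over those positions to produce the row $p(\cdot\mid v)$. The only minor differences are cosmetic: you flag the soft-vs-hard attention gap and the $S_v=\emptyset$ edge case explicitly, whereas the paper handles the former by taking the logit scale $c\to\infty$ and leaves the latter implicit.
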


\begin{proof}
We outline how to set the transformer’s parameters so that, for each pair $(v,u)$, the model’s final output can encode the conditional probability $p(u \mid v)$.

\medskip

Recall that the input to the model is a sequence $X=[x_1,\dots,x_T]$ of length $T$ with $x_t\in[V]$.
Let $z_t\in\R^d$ be the input embedding of the $t$-th element of the sequence $x_t$, and $Z = [z_1, z_2, \ldots, z_T]^\top_{T\times d}$ be the sequence of input embeddings.  

We write the two-layer transformer with $V$ heads in the second layer as 
\[
    {f}(X)
    \;=\;
    W_o \,\bigl[\; h_1(Z)\;\|\;\cdots\|\; h_V(Z)\bigr],
\]
where ``$\|\,$'' denotes concatenation, $W_o \in \R^{V^2\times dV}$ is the final linear projection, and each $h_v(Z)$ is the output of the $v$-th attention head in the second layer. Specifically,
\[
    h_v(Z) 
    \;:=\;
    (\mathrm{Attn}^v_2 + I)\,\circ\, (\mathrm{Attn}^v_1 + I)\,(Z),
\]
with
\[
    \mathrm{Attn}(Z) 
    \;=\; 
    \mathrm{softmax}\bigl(\mathrm{mask}(A)\bigr)\,ZW_V,
\]
where $W_V$ is the value matrix, and the attention map $A \in \mathbb{R}^{T \times T}$ is defined as 
\[
    A_{i,j} \;=\; \frac{(z_i^\top W_Q \;+\; r_{i-j+1}^\top)\,W_K\,z_j^\top}{\sqrt{d}}.
\]
Here $W_Q$, $W_K$, are the key and query matrix, $r_{k}$ is the relative positional embedding, and $\mathrm{mask}(\cdot)$ enforces causal masking so that position $i$ can only attend to positions $1,\dots,i$.

We will show how to choose the model’s weights so that, for each \(v\), the embedding from the \(v\)-th head of the second layer, \(h_v(Z)\), produces vectors which—when multiplied by \(W_o\)—yield the conditional distribution \(p(\cdot \mid v)\) across the sequence \(X\).  
For simplicity, we will omit the superscript \(v\) in \(\mathrm{Attn}_\ell^v\) from now on.
\medskip

\noindent \textbf{Layer 1: Isolating the previous token.}
We first show how to configure the weights of the first layer so that each position $i$ copies the immediately preceding token $z_{i-1}$. For this construction we need the embedding dimension of $d=3V$.  

Set the embedding matrix $E = [I_V,I_V,0]_{V\times d}$. In this case, the input embedding  $z_t$ is  $= [\; \underbrace{{z_t^1}^\top}_{V},\;\underbrace{{z_{t}^2}^\top}_{V},\;\underbrace{{z_t^3}^\top}_{V}]^\top = [e_{x_t}^\top,e_{x_t}^\top,0]^\top$ where $e_v$ is the $v$-th basis vector.

Now let the key, query matrix and the positional embeddings be as follows.
  \[
      W_Q = 0,
      \quad
      W_K 
      \;=\;
      \begin{pmatrix}
        c\,I_{V \times V} & 0 & 0 \\
        0 & 0 & 0 \\
        0 & 0 & 0
      \end{pmatrix},
      \quad
      R \;=\; e_2\,\cdot\,1^\top,
  \]
  where $c$ is a large constant. With these weights, the attention matrix \(A\) is zero for all entries except when \(j = i-1\); that is, \(A_{i,j} = 0\) if \(j \neq i-1\). For the entry where \(j = i-1\), we have \(A_{i,i-1} = 1^\top W_K z_{i-1} = c\). As \(c\) approaches infinity, the softmax function saturates. In other words, if we define
\[
 S = \mathrm{softmax}\bigl(\mathrm{mask}(A)\bigr),
\]
then \(S_{i,j} = 1\) when \(j = i-1\) and \(0\) for all other \(j\).
  This ensures that the model only attends to the previous position in the first layer.

Now choosing 
  \[
      W_V 
      \;=\;
      \begin{pmatrix}
        0 & 0 & 0 \\
        0 & I_{V \times V} & 0 \\
        0 & 0 & 0
      \end{pmatrix},
  \]
  we have
  \begin{align*}
      \mathrm{Attn}_1(Z)_{i,:} = \sum_{t=1}^T  S_{i,t}z_t^\top W_V = [0, {z_{i-1}^2}^\top, 0]^\top.
  \end{align*}
  This extracts and carries along the “middle” $V$ components of the embedding at position $i-1$.
  Putting things together, the embedding of after the first layer will be
\[
    z'_i 
    \;:= \; (\mathrm{Attn}_1 + I)\,(Z)_i = 
    \bigl[\; {z_i^1},\;z_i^2+z_{i-1}^2,\;z_i^3\bigr]
    \;.
\]
\medskip

\noindent \textbf{Layer 2 - $v$-th head: Computing conditional probabilities $p(\cdot \mid v)$.}
We now use $V$ heads in the second layer to encode, for each possible previous token $v \in V$, how likely the next token is $u \in V$.  

In this layer, we set the relative positional embedding $r=0$ and 
\[
      W_Q 
      \;=\;
      \begin{pmatrix}
        0 & 0 & 0 \\
        0 & 1_v\, e_v^\top & 0 \\
        0 & 0 & 0
      \end{pmatrix},
      \quad
      W_K 
      \;=\;
      c 
      \begin{pmatrix}
        0 & 0 & 0 \\
        -e_v\,e_v^\top & e_v\,e_v^\top & 0 \\
        0 & 0 & 0
      \end{pmatrix}.
  \]
With this weight configuration, we have
\begin{align*}
    A_{T,i} = {z'_T}^\top W_QW_k^\top z'_i = \begin{cases}
        c & \text{if } x_{i-1}=v\\
        0 & \text{o.w.}
    \end{cases}
\end{align*}

Then, as $c\to\infty$, 
\begin{align*}
     S_{T,i} = \frac{\mathds{1}(x_{i-1}=v)}{\sum_{t=1}^T\mathds{1}(x_{t-1}=v)}
\end{align*}

Now setting 
\begin{align*}
    W_V 
      \;=\;
      \begin{pmatrix}
        0 & 0 & I_V \\
        0 & 0 & 0 \\
        0 & 0 & 0
      \end{pmatrix},
\end{align*}
we have 
\begin{align*}
    h_v(Z):=\sum_{t=1}^T S_{T,t}W_V^\top z'_t+z'_T &= \sum_{u\in[V]} \frac{\sum \mathds{1}(x_{i-1}=v,x_i=u)}{\sum \mathds{1}(x_{i-1}=v)}[0,0,e_u^\top]^\top + z'_T\\
    &= \sum_{u\in[V]} p(u\mid v)[0,0,e_u^\top]^\top + z'_T
\end{align*}

Finally, with $ P_{V\times 3V} = [0, 0 ,I]$, setting $W_O$ as 
\begin{align*}
   W_O = \begin{pmatrix}
         P & 0 & \cdots & 0 \\
        0 &  P & \cdots & 0 \\
        0 & 0 & \cdots & 0 \\
        0 & 0 & \cdots &  P
      \end{pmatrix},
\end{align*}
we have 
\begin{align*}
    f(Z) &= W_O [\; h_1(Z)\;\|\;\cdots\|\; h_V(Z)\bigr] \\ 
    &= [\; \sum_u p(\cdot \mid v=1) e_u \;\|\;\cdots\|\; \sum_u p(\cdot \mid v=V) e_u \bigr].
\end{align*}
\end{proof}

\end{document}